\documentclass{article}

\usepackage{microtype}
\usepackage{graphicx}
\usepackage{booktabs}
\usepackage{hyperref}

\usepackage{amsmath}
\usepackage{amssymb}
\usepackage{mathtools}
\usepackage{amsthm}
\usepackage{amsfonts}
\usepackage{stackengine}

\usepackage[capitalize,noabbrev]{cleveref}
\usepackage{multicol}
\usepackage{caption}
\usepackage{subcaption}
\usepackage[dvipsnames]{xcolor}
\usepackage{multirow}
\usepackage{bm}
\usepackage[ruled, vlined]{algorithm2e}
\usepackage{appendix}
\usepackage{float}
\usepackage{etoolbox}
\usepackage{wrapfig}
\usepackage{cancel}
\usepackage{times}
\usepackage{etoolbox}

\usepackage[preprint]{corl_2022} 

\everypar{\looseness=-1}


\DeclareMathOperator*{\argmin}{arg\,min}
\newcommand{\barbelow}[1]{\stackunder[1.2pt]{$#1$}{\rule{.8ex}{.075ex}}}

\newcommand{\inner}[2]{\langle#1, #2\rangle}

\newcommand{\Expect}[2]{\mathbb{E}_{#1}\Big[#2\Big]}
\newcommand{\Expectt}[2]{\mathbb{E}_{#1}\Bigg[#2\Bigg]}
\newcommand{\ind}[1]{\mathbb{I}[{#1}]}
\newcommand{\Par}[1]{\Big( #1 \Big)}
\newcommand{\real}{\mathbb{R}}

\newcommand{\diag}[1]{\text{diag}(#1)}

\newcommand{\eye}{\textrm{I}}

\newcommand{\hrulethick}{\specialrule{0.1em}{0em}{0em}}

\newcommand{\Det}[1]{\Big| \text{det}\ #1 \Big|}
\newcommand{\pderiv}[2]{\frac{\partial #1}{\partial #2}}

\newcommand{\Ncal}{\mathcal{N}}

\newcommand{\Ccal}{\mathcal{C}}

\newcommand{\mutilde}{\tilde{\mu}}

\newcommand{\phitilde}{\tilde{\phi}}
\newcommand{\thetatilde}{\tilde{\theta}}

\newcommand{\mutildebold}{\bm{\tilde{\mu}}}

\newcommand{\xhatbold}{\bm{\hat{x}}}
\newcommand{\uhatbold}{\bm{\hat{u}}}

\newcommand{\zhatbold}{\bm{\hat{z}}}
\newcommand{\yhatbold}{\bm{\hat{y}}}
\newcommand{\xhat}{\hat{x}}
\newcommand{\uhat}{\hat{u}}
\newcommand{\zhat}{\hat{z}}

\newcommand{\fhat}{\hat{f}}
\newcommand{\Jhat}{\hat{J}}

\newcommand{\pibold}{\bm{\pi}}
\newcommand{\mubold}{\bm{\mu}}
\newcommand{\thetabold}{\bm{\theta}}

\newcommand{\nfmpc}{\textbf{NFMPC}}

\newcommand{\nfmpco}{\textbf{NFMPC (Obs)}}
\newcommand{\nfmpcno}{\textbf{NFMPC (No Obs)}}

\newcommand{\nfmpcns}{\textbf{NFMPC (No Shift)}}

\newcommand{\mppi}{\textbf{MPPI}}
\newcommand{\flowmppi}{\textbf{FlowMPPI}}
\newcommand{\pngrid}{\textsc{PNGrid}}

\newcommand{\pnrand}{\textsc{PNRand}}
\newcommand{\pnranddyn}{\textsc{PNRandDyn}}
\newcommand{\franka}{\textsc{Franka}}
\newcommand{\frankaobs}{\textsc{FrankaObstacles}}

\definecolor{myblue}{RGB}{119, 177, 233}
\definecolor{mypurple}{RGB}{212, 152, 229}
\definecolor{mygreen}{RGB}{184, 233, 134}
\definecolor{myyellow}{RGB}{233, 213, 134}

\SetKwInput{KwParam}{Parameters}
\SetKwFor{ParallelFor}{for}{do in parallel}{end}

\theoremstyle{plain}
\newtheorem{theorem}{Theorem}[section]

\theoremstyle{definition}

\theoremstyle{remark}

\title{Learning Sampling Distributions for Model Predictive Control}

%


\author{
  Jacob Sacks \\
  University of Washington \\
  \texttt{jsacks6@cs.washington.edu}
  \And 
  Byron Boots \\
  University of Washington \\
  \texttt{bboots@cs.washington.edu}
}

\begin{document}
\maketitle
\vspace{-4ex}

\begin{abstract}
Sampling-based methods have become a cornerstone of contemporary approaches to Model Predictive Control (MPC), as they make no restrictions on the differentiability of the dynamics or cost function and are straightforward to parallelize.
However, their efficacy is highly dependent on the quality of the \emph{sampling distribution itself}, which is often assumed to be simple, like a Gaussian.
This restriction can result in samples which are far from optimal, leading to poor performance.
Recent work has explored improving the performance of MPC by sampling in a learned latent space of controls.
However, these methods ultimately perform all MPC parameter updates and warm-starting between time steps in the control space.
This requires us to rely on a number of heuristics for generating samples and updating the distribution and may lead to sub-optimal performance.
Instead, we propose to carry out all operations in the latent space, allowing us to take full advantage of the learned distribution.
Specifically, we frame the learning problem as bi-level optimization and show how to train the controller with backpropagation-through-time.
By using a normalizing flow parameterization of the distribution, we can leverage its tractable density to avoid requiring differentiability of the dynamics and cost function.
Finally, we evaluate the proposed approach on simulated robotics tasks and demonstrate its ability to surpass the performance of prior methods and scale better with a reduced number of samples.
\end{abstract}

\keywords{Model Predictive Control, Normalizing Flows}

\vspace{-3ex}
\section{Introduction}
\vspace{-1ex}
Sequential decision making under uncertainty is a fundamental problem in machine learning and robotics.
Recently, model predictive control (MPC) has emerged as a powerful paradigm to tackle such problems on real-world robotic systems.
In particular, MPC has been successfully applied to helicopter aerobatics \citep{abbeel2010autonomous}, robot manipulation \citep{kumar2016optimal, finn2017deep, bhardwaj2021fast}, humanoid robot locomotion \citep{erez2013integrated}, robot-assisted dressing \citep{erickson2018deep}, and aggressive off-road driving \citep{williams2016aggressive, williams2017information, wagener2019online}.
Sampling-based approaches to MPC are becoming particularly popular due to their simplicity and ability to handle non-differentiable dynamics and cost functions.
These methods work by sampling controls from a policy distribution and rolling out an approximate system model using the sampled control sequences.
They then use the resulting trajectories to compute an approximate gradient of the cost function and update the policy.
Next, the controller samples an action from this distribution and applies it to the system.
It repeats the process from the resulting next state, creating a feedback controller.
Between time steps, it warm-starts the optimization process using a modification of the solution at the previous time step.

An important design decision is the form of the sampling distribution, which is often simple, e.g. a Gaussian, such that we can efficiently sample and tractably update its parameters.
However, this also has drawbacks: without much control over the distribution form, samples often lie in high-cost regions, hindering performance.
This can be particularly problematic in complex environments with sparse costs or rewards, as a poorly parameterized distribution may hinder efficient exploration, leading the system into bad local optima.
A side effect is that we often require many samples to accomplish the objective, increasing computational requirements.
There have been extensions which target more complex distributions, such as Gaussian mixture models \citep{okada2020variational} and a particle method based on Stein variational gradient descent (SVGD) \citep{lambert2020stein}.
However, there is a large amount of structure in the environment that these methods fail to exploit.
Instead, an alternative approach is to \emph{learn} a sampling distribution which can leverage the environmental structure to draw more optimal samples.

Prior work on learning MPC sampling distributions generally requires differentiability of the dynamics and cost function \citep{amos2020differentiable, agarwal2020imitative}.
\citet{power2021variational, power2022variational} circumvent this issue by leveraging normalizing flows (NFs) \citep{dinh2014nice, rezende2015variational, dinh2016density}, which have a tractable log-likelihood.
This property allows them to learn flexible distributions by directly optimizing the MPC cost without requiring differentiability via the likelihood-ratio gradient.
However, a limitation of their approach is that all online updates to the distribution and warm-starting between time steps occur entirely in the control space, leaving the latent distribution fixed.
This forces them to apply heuristics to generate samples by combining those from the learned distribution with Gaussian perturbations to the current control-space mean.
These restrictions prevent us from fully taking advantage of the learned distribution and potentially throws away useful information.
Additionally, their approach does not allow for the incorporation of control constraints directly into the sampling distribution, which is important for real-world robots.

Instead, we propose to alter the optimization machinery to operate entirely in the latent space.
As the NF latent space follows a simple distribution, it remains feasible to perform MPC updates in this learned space and update the latent distribution online.
Specifically, during an episode, the parameters of the latent distribution are updated with MPC while those of the NF remain fixed.
Then during training, after each episode, the parameters of the NF are updated.
We can  frame this setup as a bi-level optimization problem \citep{bard2013practical} and derive a method for computing an approximate gradient through the latent MPC update.
This involves treating MPC as a recurrent network, where the control distribution acts as a form of memory, and unrolling the computation to train with backpropagation-through-time (BPTT).
However, it is no longer clear how to warm-start between time steps because there is no clear delineation of time in the latent space.
Moreover, the usual method of warm-starting, which simply shifts the current plan forward in time, may be sub-optimal.
Therefore, we propose to learn a shift model, which performs all warm-starting operations in the latent space.
Finally, we show how to alter the NF architecture to incorporate box constraints on the sampled controls.

\textbf{Contributions:}
In this work, we build upon recent efforts to learn sampling distributions for MPC with NFs by moving all online parameter updates and warm-start operations into the latent space.
We accomplish this by framing the learning problem as bi-level optimization and derive an approximate gradient through the MPC update of the latent distribution to train the network with BPTT.
Additionally, we show how to parameterize the flow architecture such that we can incorporate box constraints on the controls.
Finally, we empirically evaluate our proposed approach on simulated navigation and manipulation tasks.
We demonstrate its ability to improve performance over the baselines by taking full advantage of the learned latent space.
We find that the performance of the controllers with our learned sampling distributions scales more gracefully with a reduction in the number of samples.

\vspace{-1.5ex}
\section{Sampling-Based Model Predictive Control}
\vspace{-1ex}
We consider the problem of controlling a discrete-time stochastic dynamical system, which is in state $x_t \in \real^N$ at time step $t$. 
Upon the application of control $u_t \in \real^M$, the system incurs the instantaneous cost $c(x_t, u_t)$ and transitions to $x_{t+1} \sim f(x_t, u_t)$.
We wish to design a policy $u_t \sim \pi(\cdot | x_t)$ such that the system achieves good performance over $T$ steps.
Instead of finding a single, globally optimal policy, MPC re-optimizes a local policy at each time step by predicting the system's behavior over a finite horizon $H < T$ using an approximate model $\hat{f}$.
Specifically, in sampling-based MPC, we sample a control sequence $\uhatbold_t \sim \pi_{\theta} (\cdot)$, where $\uhatbold_t \triangleq (\uhat_t, \uhat_{t+1}, \cdots, \uhat_{t+H-1})$, and our policy is parameterized by $\theta \in \Theta$. 
We rollout our model starting at $x_t$ using these sampled controls to get our predicted state sequence $\xhatbold_t \triangleq (\xhat_t, \xhat_{t+1}, \cdots, \xhat_{t+H})$, with $\xhat_t = x_t$.
The total trajectory cost is
\begin{equation}
C(\xhatbold_t, \uhatbold_t) = \sum_{h=0}^{H-1} c(\xhat_{t+h}, \uhat_{t+h}) + c_{term}(\xhat_{t+H}),
\end{equation}
where $c_{term}(\cdot)$ is a terminal cost function.
We then construct a statistic $\Jhat(\theta; x_t)$ defined on cost $C(\xhatbold_t, \uhatbold_t)$ and use the rollouts to solve $\theta_t \leftarrow \argmin_{\theta \in \Theta} \Jhat(\theta; x_t)$.
After optimizing our parameters, we sample the control sequence $\uhatbold_t \sim \pi_{\theta_t}(\cdot)$, apply the first control to the real system (i.e. $u_t = \uhat_t$), and repeat the process.
Because each parameter $\theta_t$ depends on the current state, MPC effectively yields a state-feedback policy, even though the individual distributions give us an open-loop sequence.

In this paper, we consider a popular sampling-based MPC algorithm known as Model Predictive Path Integral (MPPI) control \citep{williams2016aggressive, williams2017information}.
MPPI optimizes the exponential utility or risk-seeking objective:
\begin{equation}
\Jhat(\theta; x_t) = - \log \Expect{\pi_\theta, \fhat}{\exp \Par{-\frac{1}{\beta} C(\xhatbold_t, \uhatbold_t)}},
\end{equation}
where $\beta>0$ is a scaling parameter, known as the temperature.
As we do not assume that the dynamics or cost function are differentiable, we compute the gradients via the likelihood-ratio derivative:
\begin{equation}
\nabla \Jhat(\theta; x_t) = - \frac{
    \Expect{\pi_\theta, \fhat}{e^{-\frac{1}{\beta}C(\xhatbold_t, \uhatbold_t)} \nabla_\theta \log \pi_{\theta}(\uhatbold_t)}
}{
    \Expect{\pi_\theta, \fhat}{e^{-\frac{1}{\beta}C(\xhatbold_t, \uhatbold_t)}}
}.
\label{eq:grad_Jhat}
\end{equation}
In MPPI, the policy is assumed to be a factorized Gaussian of the form
\begin{equation}
\pi_{\theta}(\uhatbold) = \prod_{h=0}^{H-1} \pi_{\theta_h}(\uhat_{t+h})
= \prod_{h=0}^{H-1}  \Ncal(\uhat_{t+h}; \mu_{t+h}, \Sigma_{t+h}).
\label{eq:factorized_gaussian}
\end{equation}
Previous work by \citet{wagener2019online} has shown that optimizing this objective with dynamic mirror descent (DMD) \citep{hall2013dynamical} and approximating with Monte Carlo estimates gives us the MPPI update rule:
\begin{equation}
\mu_{t+h} = (1-\gamma_t) \tilde{\mu}_{t+h} + \gamma_t \sum_{i=1}^N w_i \uhat_{t+h}^{(i)},\quad
w_i = \frac{
e^{-\frac{1}{\beta} C(\xhatbold_t^{(i)}, \uhatbold_t^{(i)})}
}{
\sum_{j=1}^N e^{-\frac{1}{\beta} C(\xhatbold_t^{(j)}, \uhatbold_t^{(j)})}
}
\label{eq:MPPI_mean_update_approx}
\end{equation}
where $\mutilde_{t+h}$ is the current mean for each time step and $\gamma_t$ is the step size.
Between time steps of DMD, we get $\mutilde_{t+h}$ from our previous solution $\mu_{t+h}$ by using a shift model $\mutilde_{t+h} = \Phi(\mu_{t+h})$.
This shift model aims to predict the optimal decision at the next time step given the previous solution. 
In the context of MPC, it allows us to warm-start the optimization problem to speed up convergence, as we can only approximately solve the optimization problem due to real-time constraints.

\vspace{-1.5ex}
\section{Learning the Sampling Distribution of MPC}
\vspace{-1ex}
\subsection{Representation of the Learned Distribution}
\vspace{-1ex}
Instead of using uninformed sampling distributions, learned distributions can potentially exploit structure in the environment to draw samples which are more likely to be collision-free and close to optimal.
However, such learned distributions must be sufficiently expressive in order to better capture near-optimal, potentially multimodal, behavior.
They must also be parameterized such that it is tractable to sample from and update online.
If the distribution has a large number of parameters, the number of samples required to efficiently update them online may be computationally infeasible.
And ideally, the form of our distribution would be such that we could find a closed-form update.

One path towards meeting these criteria is to maintain a simple latent distribution from which we can sample, and then learn a transformation of the samples which maps them to a more complex distribution.
During training, we learn the parameters of this transformation, which can be conditioned on problem-specific information, such as the starting and goal configurations of the robot and obstacle placements.
However, when executing the policy during an episode, the parameters of this learned transformation remain fixed, and instead, we update the parameters of the latent distribution.
Concretely, we consider learning a distribution $\pi_{\theta, \lambda}$ which is defined implicitly as follows:
\begin{equation}
\zhatbold_t \sim p_\theta(\cdot), \quad \uhatbold_t = h_\lambda(\zhatbold_t; c)
\end{equation}
where $\zhatbold_t \triangleq (\zhat_t, \zhat_{t+1}, \cdots, \zhat_{t+H-1})$, $c$ is a context variable describing the relevant information of the environment, $p_\theta$ is the latent distribution with parameters $\theta$, and $h_\lambda$ is the learned conditional transformation with parameters $\lambda$.
Moving forward, we assume that both $\zhatbold_t$ and $\uhatbold_t$ are stacked as vectors in $\real^{MH}$.
If $p_\theta$ is a Gaussian factorized as in \Cref{eq:factorized_gaussian} and we assume that $h_\lambda$ is invertible, we prove in \Cref{app:proof_latent_update} that the corresponding DMD update to the latent mean is simply \Cref{eq:MPPI_mean_update_approx}, except that we replace the controls in the weighted sum with the latent samples.

\vspace{-1ex}
\subsection{Formulating the Learning Problem}
\vspace{-0.5ex}

Learning the distribution $\pi_{\theta, \lambda}$ amounts to solving a bi-level optimization problem \citep{bard2013practical}, in which one optimization problem is nested in another.
The lower-level optimization problem involves updating the latent distribution parameters at each time step, $\theta_t$, by minimizing the expected cost with DMD.
The upper-level optimization problem consists of learning $\lambda$ such that MPC performs well across a number of different environments.
To formalize this, first consider that we have some distribution of environments $c \sim \Ccal(\cdot)$ over which we wish MPC to perform well.
For each environment, our system has some conditional initial state distribution $x_0 \sim \rho(\cdot | c)$.
The objective we wish to minimize is then
\begin{equation}
\ell(\thetabold, \lambda; c) = \Expectt{\pibold_{\thetabold,\lambda}, \rho, f}{
\sum_{t=0}^{T-1} \Jhat(\theta_t, \lambda; x_t, c)
}
\label{eq:bilevel-objective}
\end{equation}
where $\thetabold = (\theta_0, \theta_1, \cdots, \theta_{T-1})$ and our cost statistic, $\Jhat$, now depends on $\lambda$ and $c$ as well.
This objective measures the expected performance of the intermediate plans produced by MPC along the $T$ steps of the episode.
Our desired bi-level optimization problem can be formulated as:
\begin{equation}
\begin{aligned}
\min_\lambda\ \Expect{\Ccal}{\ell(\thetabold(\lambda), \lambda; c)} \quad
\text{s.t.}\ \thetabold(\lambda) \approx_\lambda \argmin_{\thetabold} \ell(\thetabold, \lambda; c)
\end{aligned}
\label{eq:bilevel_problem}
\end{equation}
where $\approx_\lambda$ indicates that we approximate the solution of the optimization problem with an iterative algorithm that may also be parameterized by $\lambda$, as the exact minimizer is not available in closed form.
Moreover, the notation $\thetabold(\lambda)$ indicates the dependence of the lower-level solution on the upper-level parameters.
In our case, we solve the lower-level problem with DMD, where we also parameterize the shift model, $\Phi_\lambda(\cdot; c)$, making it a learnable component and conditioned on $c$.

The normal shift model in MPC simply shifts the control sequence forward one time step and appends a zero or random control at the end.
However, because we are performing this update in the latent space, there is no clear delineation between time steps of the latent controls, as they are coupled according to the learned transformation.
Therefore, there is no way to easily perform the equivalent shift operation in the latent space.
As such, we instead learn this shift model along with the transformation.
Besides, the standard approach described above may not be optimal.
By learning it, we may be able to further improve performance.
This is especially true because the performance hinges greatly on the quality of the shift model since we only run one iteration of the DMD update.

\vspace{-1ex}
\subsection{Parameterizing with Normalizing Flows}
\vspace{-0.5ex}
\label{sec:param}
In order to optimize the upper-level objective in \Cref{eq:bilevel_problem} with respect to $\lambda$, we need to be able to compute the density $\pi_{\theta, \lambda}$ directly.
Therefore, we choose to represent $h_\lambda$ with a normalizing flow (NF) \citep{dinh2014nice, dinh2016density, rezende2015variational}, which explicitly learns the density by defining an invertible transformation that maps latent variables to observed data.
Generally, we compose a series of component flows together, i.e. $h_{\lambda} = h_{\lambda_K} \circ h_{\lambda_{K-1}} \circ \dots \circ h_{\lambda_1}$, which define a series of intermediate variables $\yhatbold_0, \dots, \yhatbold_{K-1}, \yhatbold_K$, with $\yhatbold_0=\zhatbold$ and $\yhatbold_K=\uhatbold$.
The log-likelihood of the composed flow is given by:
\begin{equation}
\log \pi_{\theta, \lambda}(\uhatbold | c) 
= \log p_\theta(\zhatbold) - \sum_{i=1}^K \log \Det{\pderiv{\yhatbold_i}{\yhatbold_{i-1}}}.
\label{eq:nf_log_likelihood}
\end{equation}
In this work, we make use of the affine coupling layer proposed by \citet{dinh2016density} as part of the real non-volume-preserving (RealNVP) flow.
The core idea is to split the input $\uhatbold$ into two partitions $\uhatbold = (\uhatbold_{I_1}, \uhatbold_{I_2})$, where $I_1$ and $I_2$ are a partition of $[1, MH]$, and apply
\begin{equation}
\begin{aligned}
\yhatbold_{I_1} = \uhatbold_{I_1}, \quad
\yhatbold_{I_2} = \uhatbold_{I_2} \odot \exp{s_{\lambda}(\uhatbold_{I_1}, c)} + t_{\lambda}(\uhatbold_{I_1}, c),
\end{aligned}
\end{equation}
where $s_{\lambda}$ and $t_{\lambda}$ are the scale and translation terms, which are represented with arbitrary neural networks, and $\odot$ is the Hadamard product. 
This makes computing the log-determinant term in \Cref{eq:nf_log_likelihood} and inverting the flow fast and efficient.
Now, in robotics, we often have lower and upper limits on the controls.
These are usually enforced in sampling-based MPC by either clamping the control samples or passing them through a scaled sigmoid.
However, instead of enforcing the constraints heuristically after sampling, we learn a constrained sampling distribution directly.
Since the sigmoid function is invertible and has a tractable log-determinant (shown in \Cref{app:sigmoid}), we can simply append one after $h_{\lambda_K}$ in the NF and scale it by the control limits.
This ensures that control constraints are satisfied by design and taken into account while learning the distribution.

\vspace{-1ex}
\subsection{Training the Sampling Distribution}
\label{sec:training}
\vspace{-0.5ex}
Computing gradients through the upper-level objective is not straightforward, as both the expectation and the inner terms of \Cref{eq:bilevel-objective} depend on $\lambda$.
Therefore, the state distribution depends on the NF and latent shift model.
One way around this issue is to consider a modified objective at each batch $d$:
\begin{equation}
\ell_d(\thetabold, \lambda; c) = \Expectt{\pibold_{\thetabold,\lambda_d}, \rho, f}{
\sum_{t=0}^{T-1} \Jhat(\theta_t, \lambda; x_t, c)
},
\end{equation}
which fixes the outer expectation to be with respect to the current policy.
Intuitively, this choice trains the NF to optimize the MPC cost function under the state distribution resulting from the current policy $\pibold_{\thetabold,\lambda_d}$.
This effectively ignores the dependency of the action chosen to interact with the true environment and the learned parameters.
We then update the outer expectation distribution at each episode, overcoming the covariate shift problem that would otherwise arise.

Now, we only have to focus on computing the gradient $\nabla_\lambda \Jhat(\theta_t(\lambda), \lambda; x_t, c)\rvert_{\lambda=\lambda_d}$ for each time step, which can be computed similar to \Cref{eq:grad_Jhat} and approximated with Monte Carlo sampling:
\begin{equation}
\nabla \Jhat(\theta_t(\lambda), \lambda; x_t, c) \approx - \sum_{i=1}^N w_i \nabla_\lambda \log \pi_{\theta_t(\lambda), \lambda}(\uhatbold_t^{(i)} | c),
\end{equation}
where the weights $w_i$ are defined according to \Cref{eq:MPPI_mean_update_approx}.
The log-likelihood is given by \Cref{eq:nf_log_likelihood}, the gradient of which involves computing the backwards pass through the network $h_\lambda$. 
However, we also have to consider the dependence of the latent distribution parameters $\thetabold(\lambda)$ on $\lambda$.
Therefore, we must backpropagate through the MPC update: 
\begin{equation}
\mubold_{t}(\lambda) = (1-\gamma_t) \mutildebold_{t}(\lambda) + \gamma_t \Delta \mubold_{t},\quad
\Delta \mubold_{t} = 
\frac{\Expect{\pi_{\thetatilde_{t}(\lambda), \lambda}, \fhat}{e^{ -\frac{1}{\beta} C(\xhatbold_t, \uhatbold_t) } h_{\lambda}(\uhatbold_{t}; c) }}{\Expect{\pi_{\thetatilde_{t}(\lambda), \lambda}, \fhat}{e^{ -\frac{1}{\beta} C(\xhatbold_t, \uhatbold_t)} }}
\label{eq:latent_mppi_update}
\end{equation}
where the previous shifted mean $\mutildebold_{t}(\lambda)$ is given by the learned latent shift model.
Note that we have rewritten the expectations in terms of the control distribution, rather than the latent distribution. 
This is necessary in order to derive the following approximate gradient without requiring differentiability.
To compute the gradient of \Cref{eq:latent_mppi_update}, we must approximate the gradient of $\Delta \mubold_{t}$ with respect to $\lambda$, which we can compute as $\pderiv{\Delta \mubold_t}{\lambda} \approx M_1 - M_2 M_3$, where we define:
\begin{equation}
\begin{gathered}
M_1 = \sum_{i=1}^N w_i \Big[ \nabla_\lambda h_\lambda(\uhatbold_t^{(i)}; c) + h_\lambda(\uhatbold_t^{(i)}; c) \nabla_\lambda \log \pi_{\thetatilde(\lambda),\lambda}(\uhatbold_t^{(i)} | c) \Big],\\
M_2 = \sum_{i=1}^N w_i h_\lambda(\uhatbold_t^{(i)}; c),\quad 
M_3 = \sum_{i=1}^N w_i \nabla_\lambda \log \pi_{\thetatilde(\lambda),\lambda}(\uhatbold_t^{(i)} | c).
\end{gathered}
\end{equation}
The derivation of this approximate gradient can be found in \Cref{app:approx_gradient_proof}.
Note that computing the gradients $\nabla_\lambda \log \pi_{\thetatilde(\lambda),\lambda}(\uhatbold_t^{(i)} | c)$ will also require us to backpropagate through the shift model due to the dependence of $\thetatilde$ on $\lambda$.
Therefore, even when the step size is set to one, i.e. $\gamma_t = 1$, we introduce a form of recurrence between time steps.
Please see \Cref{app:alg} for additional visualizations of the computational graph generated by an episode and further descriptions of the overall algorithm.

\vspace{-1.5ex}
\section{Related Work}
\vspace{-1ex}
There have been many recent attempts at combining machine learning with MPC, which mostly center around learning or fine-tuning a good dynamics model \citep{erickson2018deep, williams2017information, lenz2015deepmpc, kocijan2004gaussian, fu2016one, chua2018deep, nagabandi2018neural}, potentially from high-dimensional observations \citep{finn2017deep, wahlstrom2015pixels, watter2015embed, banijamali2018robust, ebert2018visual, ha2019adaptive, hafner2019learning}.
Alternatively, some methods target the cost function, learning terminal value functions \citep{zhong2013value, rosolia2017learning, lowrey2018plan, bhardwaj2020blending, bhardwaj2020information} or cost-shaping terms \citep{tamar2017learning} to improve performance with reduced prediction horizons.
Instead of learning dynamics and cost separately, other methods propose learning the entire optimal controller \citep{amos2020differentiable, karkus2017qmdp, okada2017path, amos2018differentiable, okada2018acceleration, pereira2018mpc, tamar2017learning} or planner \citep{tamar2016value, srinivas2018universal, yu2019unsupervised, bhardwaj2020differentiable} end-to-end.
In contrast, recent work has explored leaving the dynamics and cost function fixed and instead focused on improving the optimization process.
For example, \citet{sacks2022learning} propose to use learning to replace the update rule employed to solve the optimization problem at each time step with a neural network.
They showed that this can substantially reduce the number of samples required to achieve a target task performance with sampling-based MPC.

Using alternative sampling distributions is another viable strategy for improving the optimization process in MPC.
Multiple works have considered sampling distributions beyond simple Gaussians, such as Gaussian mixture models \citep{okada2020variational} and a particle method based on Stein variational gradient descent (SVGD) \citep{lambert2020stein}.
Additionally, most implementations use heuristics to modify samples and squeeze out additional performance gains \citep{bhardwaj2021fast, pinneri2020sample}.
In terms of learning the distribution, \citet{amos2020differentiable} learn a latent action space for their proposed differentiable cross-entropy method (CEM) controller.
However, they require all components of the pipeline to be differentiable and do not consider learning a shift model.
\citet{agarwal2020imitative} learn a normalizing flow in the latent space of a variational autoencoder (VAE). 
Yet, they also require differentiability, use expert demonstrations to learn the latent space of the VAE, and have no means of warm-starting between time steps.
\citet{wang2019exploring} propose to use a learned feedback policy to warm-start MPC, but still rely on a Gaussian perturbations to the proposed action sequence.
The authors also explore performing online planning in the network's parameter space, which results in a massive action space that may be hard to scale.

\citet{power2021variational, power2022variational} also train a normalizing flow to use as the sampling distribution for MPC, but they do not learn a latent shift model and perform all operations in the control space.
Moreover, they mix the latent samples with Gaussian perturbations to the current control-space mean, as they do not update the latent distribution directly.
This prevents them from fully taking advantage of the learned distribution and throws away useful information which could potentially improve performance.
In fact, we show in \Cref{sec:eval_shift} that the learned shift model contributes significantly to the performance gains.
Additionally, a primary focus of their work is on how to handle out-of-distribution (OOD) environments by learning a posterior over environment context variables.
We could combine their approach with ours by conditioning the learned shift model on the inferred environment context for improved generalization.
Finally, normalizing flows have also been used to improve exploration in RL \citep{tang2018boosting, tang2018implicit, mazoure2020leveraging} by providing a more flexible, and potentially multimodal, distribution. 
They have also been employed in sampling-based motion planning \citep{lai2020learning, lai2021parallelised} to provide good proposal configurations to speed up convergence.

\vspace{-2ex}
\section{Experimental Results}
\vspace{-1.5ex}
In all experiments, we denote our proposed approach as \nfmpc, the baseline MPPI implementation as \mppi, and the method by \citet{power2021variational, power2022variational} as \flowmppi.
Details about the hyperparameters, implementation, tasks, and training can be found in \Cref{app:exp_details}.
We evaluate on a fixed set of environments, which includes start states, goal locations, and obstacle placements, and run 32 rollouts for each sample amount.
Our primary metrics for comparison are the success rate, defined as the percentage of times the task goal was achieved, and the average cost of trajectories which successfully completed the task.
Additionally, we cannot ignore the overhead introduced by the NF in the control pipeline.
Therefore, we measured the average change in wall clock time across different amounts of samples for \nfmpc\ and \flowmppi\ in \Cref{app:timing}. 

\vspace{-1.5ex}
\subsection{Planar Robot Navigation}
\vspace{-1ex}

We begin by applying \nfmpc\ to a planar robot navigation task in which a 2D holonomic point-robot must reach a goal position while avoiding eight dynamic obstacles, which move around randomly at each time step (\pnranddyn).
The point-robot has double integrator dynamics with stochasticity on the acceleration commands to create a mismatch between the predictive model used by MPC and the true environment.
Each obstacle's current position is perturbed with Gaussian noise and clipped to be within map bounds, and the starting and goal locations of the robot are randomized in each episode.
The NF for both \nfmpc\ and \flowmppi\ is conditioned on the obstacle locations, current state, and goal position.
Note that we do not condition the shift model, as this consistently hurt performance.

\begin{figure}
    \centering
    \includegraphics[width=\textwidth]{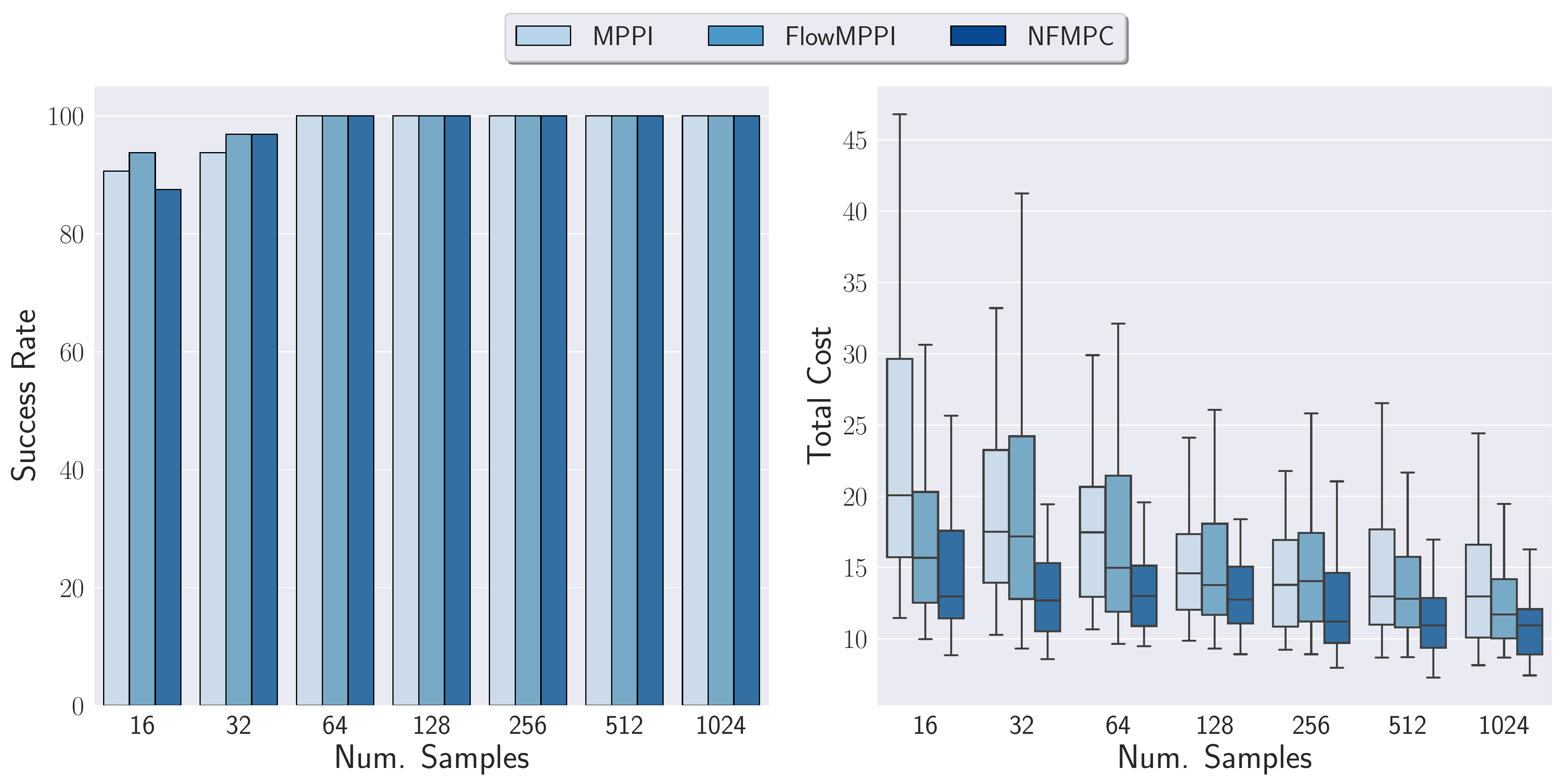}
    \vspace{-3ex}
    \caption{Success rate and cost distribution on the \pnranddyn\ environment across a different number of samples.}
    \label{fig:spnranddyn_barchart}
    \vspace{-2ex}
\end{figure}

\begin{figure*}[t]
    \begin{minipage}[t]{\textwidth}
        \raisebox{3.8ex}{\rotatebox[origin=t]{90}{NFMPC}}
        \includegraphics[width=0.98\textwidth]{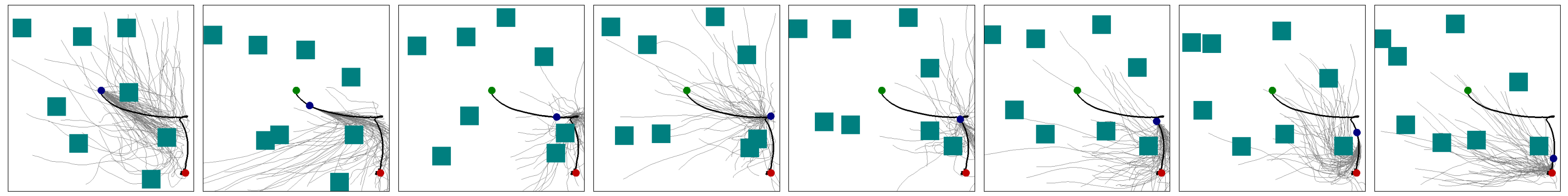}
    \end{minipage}
    \begin{minipage}[t]{\textwidth}
        \raisebox{3.8ex}{\rotatebox[origin=t]{90}{FlowMPPI}}
        \includegraphics[width=0.98\textwidth]{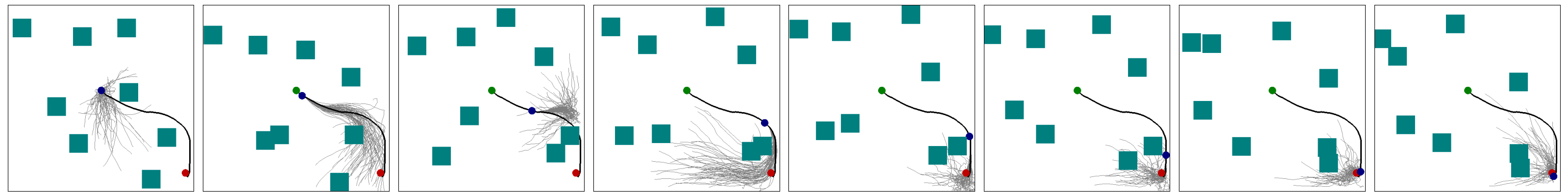}
    \end{minipage}
    \begin{minipage}[t]{\textwidth}
        \raisebox{3.8ex}{\rotatebox[origin=t]{90}{MPPI}}
        \includegraphics[width=0.98\textwidth]{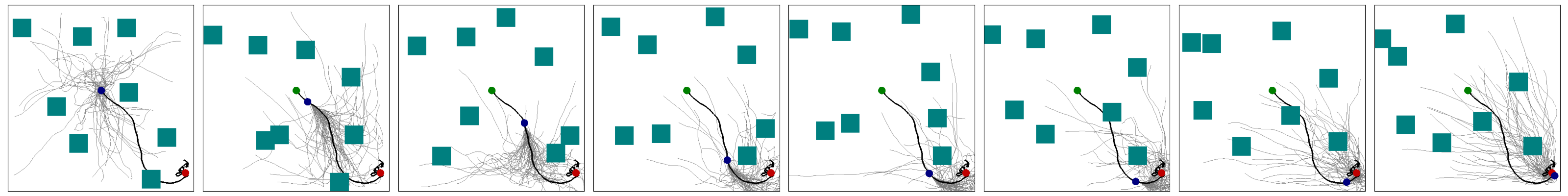}
    \end{minipage}
    \caption{Visualization of a trajectory and top samples from (top) \nfmpc, (middle) \flowmppi, and (bottom) \mppi\ on the \pnranddyn\ task.}
    \label{fig:spnranddyn_trajectories}
    \vspace{-3ex}
\end{figure*}

We quantitatively compare all controllers in \Cref{fig:spnranddyn_barchart}. 
The trajectory cost box plots represent the median and quartiles of the distribution.
We find that \nfmpc\ consistently outperforms the both \mppi\ and \flowmppi.
While all controllers reach a 100\% success rate at 1024 samples, \nfmpc\ achieves a 20\% and 8\% lower median cost over \mppi\ and \flowmppi, respectively.
We also find that \nfmpc\ scales more gracefully overall as the number of samples is reduced.
For instance, it is able to withstand a $32\times$ decrease in the number of samples (1024 to 32) while reducing success rate by only 3\% and increasing median cost by 8\%, nearly matching \flowmppi\ at 1024 samples.
In comparison, \mppi\ reduces success rate by 6\% and increases median cost by 43\%.
Similarly, \nfmpc\ outperforms \flowmppi\ at all sample amounts in terms of median cost, although at 16 samples \flowmppi\ achieves a slightly higher success rate than \nfmpc.
We visualize trajectories and top samples in \Cref{fig:spnranddyn_trajectories}.
The green and red dots are the starting and goal locations, respectively, and the blue dot is the current position of the robot at the given time step.
The thick black line is the resulting path taken by the controller, while the gray lines are the top samples generated at the current state.
In this example, \flowmppi\ nearly collides with an obstacle, while \nfmpc\ and \mppi\ are able to safely reach the goal.
\flowmppi\ over commits to a narrow corridor and is unable to reroute in time to account for the new obstacle location.
\nfmpc\ takes a similar trajectory to \flowmppi, however, it is able to pause until the obstacle moves out of the way to proceed towards the goal.
We also provide additional experiments with static environments and comparing unconditional and conditional models in \Cref{app:pn_exp}.

\begin{figure}
    \centering
    \includegraphics[width=\textwidth]{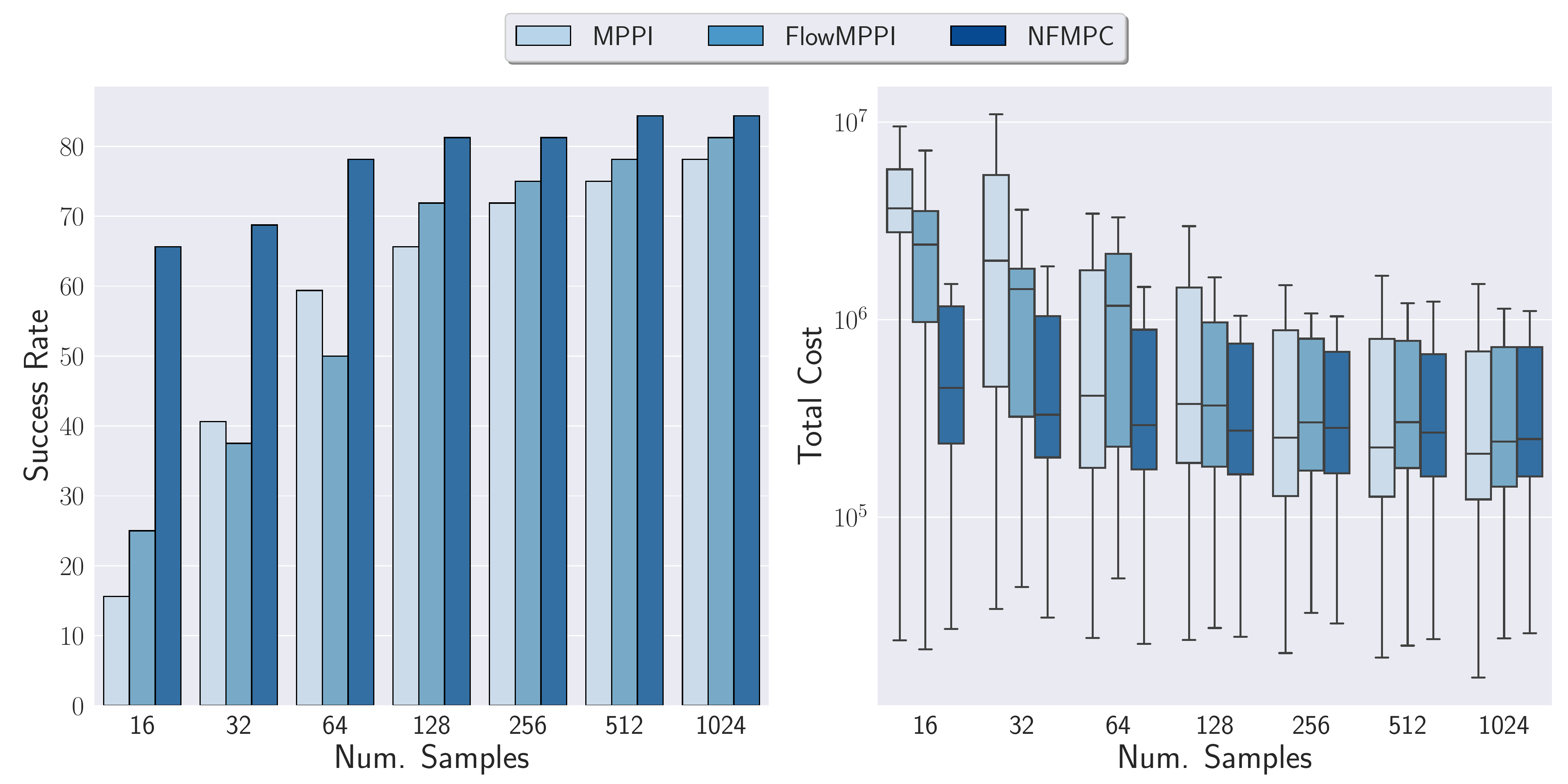}
    \vspace{-3ex}
    \caption{Success rate and cost distribution on the \frankaobs\ environment across a different number of samples.}
    \vspace{-2ex}
    \label{fig:frankaobs_barchart}
\end{figure}

\begin{figure*}[t]
    \begin{minipage}[t]{\textwidth}
        \raisebox{3.8ex}{\rotatebox[origin=t]{90}{NFMPC}}
        \includegraphics[width=0.98\textwidth]{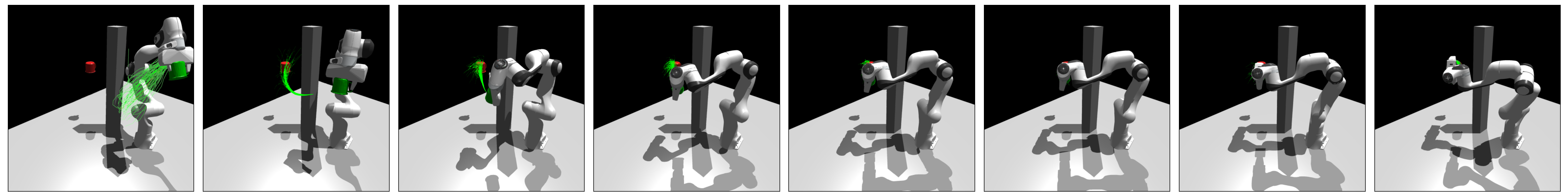}
    \end{minipage}
        \begin{minipage}[t]{\textwidth}
        \raisebox{3.8ex}{\rotatebox[origin=t]{90}{FlowMPPI}}
        \includegraphics[width=0.98\textwidth]{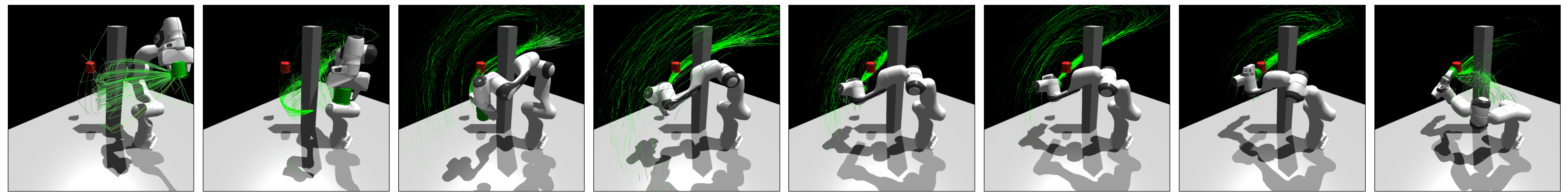}
    \end{minipage}
        \begin{minipage}[t]{\textwidth}
        \raisebox{3.8ex}{\rotatebox[origin=t]{90}{MPPI}}
        \includegraphics[width=0.98\textwidth]{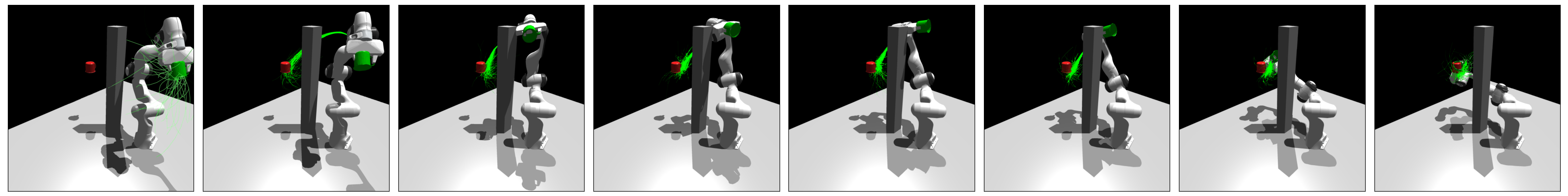}
    \end{minipage}
    \caption{Visualization of a trajectory and top samples from (top) \nfmpc, (middle) \flowmppi, and (bottom) \mppi\ on the \frankaobs\ task.}
    \label{fig:franka_obstacles_trajectories}
    \vspace{-3ex}
\end{figure*}

\vspace{-2.5ex}
\subsection{Franka Panda Arm}
\vspace{-1ex}

Next, we apply \nfmpc\ to the \franka\ task, which involves controlling a 7 degree-of-freedom (DOF) Franka Panda robot arm and steering it towards a randomly placed target goal from a fixed starting pose while avoiding a single pole obstacle.
The NF for both \nfmpc\ and \flowmppi\ is conditioned on the obstacle locations, initial state, and goal position.
It is also important to note that no controller achieves a $100\%$ success rate, as not every randomly generated environment is feasible.
As shown in \Cref{fig:frankaobs_barchart}, at 1024 samples, \nfmpc\ achieves a success rate of $84\%$, while \flowmppi\ and \mppi\ only succeed $81\%$ and $78\%$ of the time, respectively.
Again, \nfmpc\ scales better with a reduced number of samples.
With a $64\times$ decrease in the number of samples (1024 to 16), \nfmpc\ only drops in success rate by $29\%$.
Meanwhile, \flowmppi\ decreases by nearly $70\%$ to a success rate of $25\%$.
However, both fair better than \mppi, which drops by $80\%$ to a success rate of $16\%$.
These results support the hypothesis that learning to perform MPC updates in the latent space of the NF and training the controller as a recurrent network improves performance.

To more clearly understand what \nfmpc\ is doing differently, we visualize the performance of all three controllers on a held-out validation environment in \Cref{fig:franka_obstacles_trajectories}.
The green and red markers indicate the end-effector and goal positions, respectively, while the single pole is the obstacle which must be avoided.
Additionally, the green trajectories represent the top samples from the controller at each time step.
While \mppi\ collides with the obstacle, both \nfmpc\ and \flowmppi\ learn to take a different path which is collision-free.
\flowmppi\ generates better initial trajectories than \nfmpc, which are more pointed towards the goal location and achieve a greater velocity.
However, \nfmpc\ is able to better adapt the sampling distribution throughout the episode and reach the goal more quickly.
Finally, in \Cref{app:franka_exp}, we present results for the environment with no obstacles, a comparison of conditional and unconditional models, and a breakdown of the individual cost terms for all models to gain insight into the learned sampling distributions.
We also perform an ablation which removes the learned shift model from \nfmpc\ to illustrate that it is a crucial component.

\vspace{-2ex}
\section{Limitations}
\vspace{-1.5ex}
A major limitation of \nfmpc\ is that the learned distribution and shift model are only valid for a fixed horizon and control dimensionality.
Therefore, these components cannot be directly transferred to new robots or for alternate horizons without being retrained.
However, this could potentially be remedied by novel architectural innovations and training distributions across both environments and robots.
Moreover, the learned distribution is specific to the environmental distributions on which it was trained.
Therefore, it does not always perform as well when transferred to out-of-distribution environments.
However, this is always going to be a challenge for any learning-based method and addressing it is an open problem for future research.
For instance, our approach could be combined with the method developed by \citet{power2021variational, power2022variational}, which learned a generative model of environments and used this learned distribution to perform a projection step on the conditional NF.
However, this approach would only be valid for conditional models, and addressing transfer of unconditional models is an unresolved question.
Finally, both our approach and \flowmppi\ introduce an additional overhead to due to running the NF which cannot be ignored.
This increase in wall clock time may be worth the additional performance gains.
Additionally, we scale better with a reduction in samples compared to all baselines.
Therefore, we can potentially alleviate some of the introduced overhead by reducing the number of samples while still meeting the application demands.

\vspace{-2ex}
\section{Conclusion}
\vspace{-1.5ex}
We presented a method for learning MPC sampling distributions with normalizing flows (NFs) which moves all online parameter updates and warm-starting operations into the latent space.
We show how to frame the problem as bi-level optimization and derive an approximate gradient through the MPC update to train the distributions.
Additionally, we illustrate how to incorporate control box constraints directly into the NF architecture.
Through our empirical evaluations in both simulated navigation and manipulation problems, we demonstrate that our approach is able to surpass the performance of all baselines.
Moreover, we find that controllers which move all operations into the latent space are often able to scale more gracefully with a reduction in the number of samples.
These results indicate the importance of leveraging the latent space in learned sampling distributions for MPC.
Finally, because we learn all components of the controller through episodic interactions with the environment, they can potentially be trained to account for the modeling errors in the MPC controller.
\vspace{-2ex}



\bibliography{references}

\newpage
\onecolumn
\appendix
\section{Appendix}

\vspace{-1.5ex}
\subsection{Computational Graph Visualization}
\vspace{-1ex}
\label{app:alg}

\begin{wrapfigure}{r}{0.5\textwidth}
\centering
\vspace{-3ex}
\includegraphics[width=0.48\textwidth]{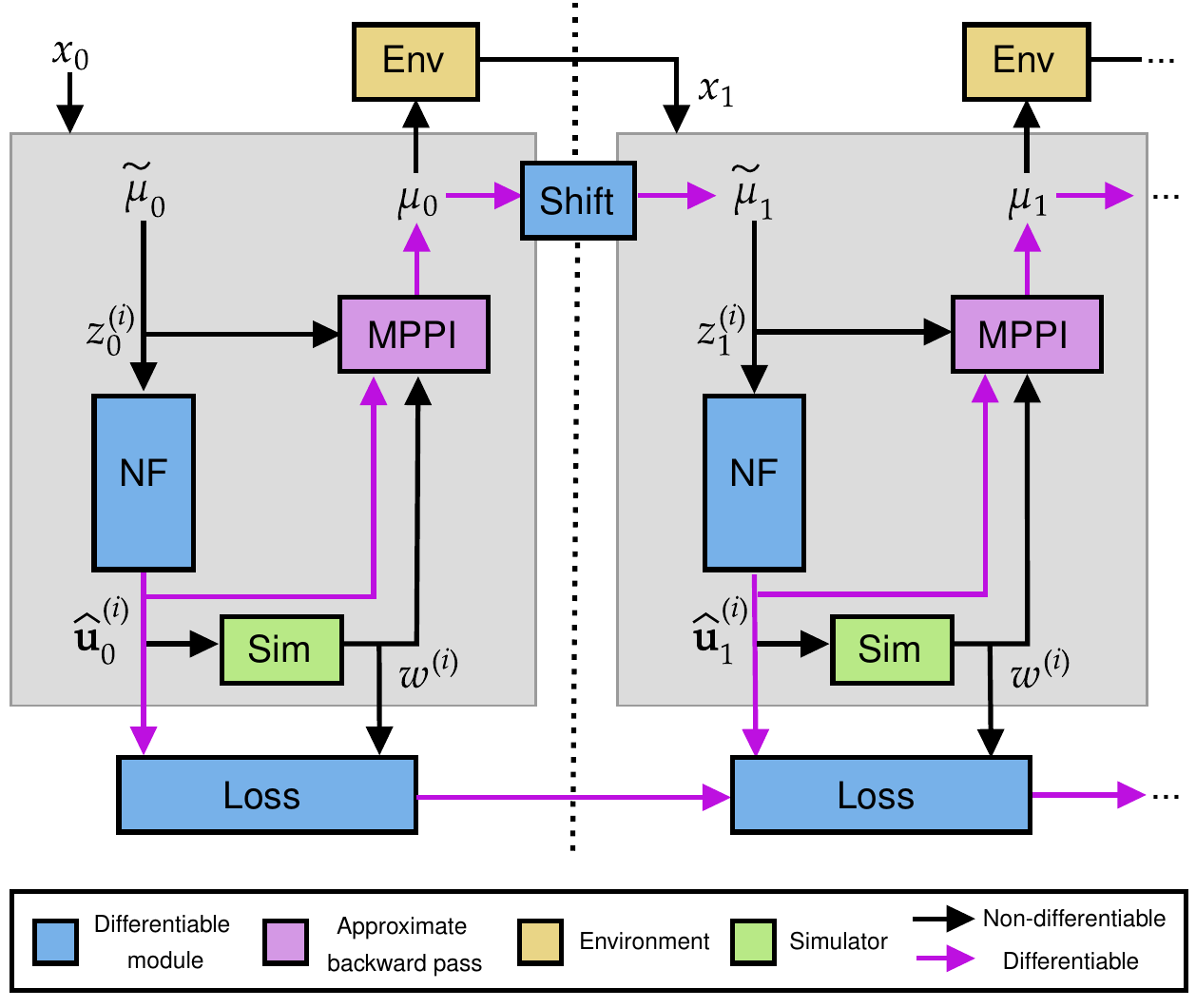}
\caption{Computational graph of an episode which illustrates the interaction between the \textcolor{myblue}{normalizing flow (NF)}, learned \textcolor{myblue}{latent shift model (Shift)}, the \textcolor{mypurple}{MPPI update (MPPI)} of the latent mean, the \textcolor{mygreen}{simulator (Sim)}, and the \textcolor{myyellow}{environment (Env)}.}
\label{fig:arch}
\vspace{-2ex}
\end{wrapfigure}

We visualize the computational graph of a single episode in \Cref{fig:arch}.
The \textcolor{myblue}{normalizing flow (NF)}, \textcolor{myblue}{shift operator (Shift)}, and \textcolor{myblue}{loss computation (Loss)} are colored blue, indicating that they are differentiable modules.
The \textcolor{mypurple}{MPPI update (MPPI)} is colored in purple, to indicate that we are approximating the gradient as above, which can be implemented as a custom backwards pass in autodifferentiation software.
All of the paths of the graph through which the gradients flow during the backwards pass are colored in purple.
The \textcolor{mygreen}{simulator (Sim)}, colored in green, is solely used to compute the weights for the MPPI update, which are reused in the backwards pass.
At each time step, the updated mean $\mubold_t$ defines a latent Gaussian, which is used to generate controls applied to the actual \textcolor{myyellow}{environment (Env)}, denoted by the yellow box.
Note that both the latent variables and controls are being passed to the \textcolor{mypurple}{MPPI} module. 
This is because during the forward pass, we can simply take the weighted sum of latent variables.
However, during the backward pass, we have to re-run the network with the controls to compute the approximate gradients, as discussed in \Cref{sec:training}.

\vspace{-2ex}
\subsection{Experimental Details}
\vspace{-0.5ex}
\label{app:exp_details}

\paragraph{Controller Details.}
\begin{table}[]
    \centering
    \caption{Controller Hyperparameters}
    \begin{tabular}{@{\extracolsep{5pt}}l*{2}{c@{\enspace}c@{\enspace}c}}
        \hrulethick
        {} & {} & \multicolumn{3}{c}{\textbf{Environment}} \\
        \cmidrule{3-5}
        \multicolumn{2}{c}{\textbf{Parameter}} & \pngrid\ & \pnrand\ & \franka\ \\
        \midrule
        \multicolumn{2}{c}{Horizon (H)} & 32 & 64 & 32\\
        \multicolumn{2}{c}{Temperature ($\beta$)} & $10^{-32}$ & $10^{-32}$ & $10^{-4}$\\
        \midrule
        \multicolumn{1}{c|}{\multirow{4}{*}{MPPI}} & Init. cov. ($\sigma^2$) & 10  & 100 & 0.1\\
        \multicolumn{1}{c|}{} & Step size ($\gamma$) & 0.7 & 1 & 1\\
        \multicolumn{1}{c|}{} & Spline knots ($n$) & None & None & 4 \\
        \midrule
        \multicolumn{1}{c|}{\multirow{3}{*}{FlowMPPI}} & Init. cov. ($\sigma^2$) & 10  & 10 & 0.1\\
        \multicolumn{1}{c|}{} & Latent cov. & 1  & 1 & 0.1\\
        \multicolumn{1}{c|}{} & Latent mean penalty ($\lambda$) & $10^{-4}$ & $10^{-3}$ & 1\\
        \midrule
        \multicolumn{1}{c|}{\multirow{1}{*}{NFMPC}} & Latent cov. & 1  & 1 & 0.1\\
        \hrulethick
    \end{tabular}
    \label{tbl:mppi_hp}
    \vspace{-2ex}
\end{table}

We use a modified version of the MPPI implementation by \citet{bhardwaj2021fast}, which is implemented in PyTorch \citep{paszke2019pytorch}.
For the MPPI baseline, we perform covariance adaptation of the full covariance matrix across the horizon and control dimensions.
The initial covariance matrix is always an identity matrix scaled by an initial covariance scalar hyperparameter $\sigma^2$.
Additionally, this implementation uses Halton sequences \citep{halton1964algorithm} for generating control sequence samples and smooths the sampled trajectories with $n$-degree B-splines in some tasks.
When B-splines are used, we sample the Halton sequence once at the beginning of a rollout and then transform it using the mean and covariance of the Gaussian distribution.
Additionally, we ensure that the mean of the Gaussian is always in the set of samples.
All hyperparameters of the controllers were chosen via a grid search and the final choices for each task are listed in \Cref{tbl:mppi_hp}.
When it improves performance, we warm-start the controllers prior to the first time step by running the MPC update for 100 iterations to ensure convergence.
Additionally, we normalize the total trajectory costs prior to computing the softmax weights, as discused by \citet{okada2020variational}.

In general, we use the same settings for \nfmpc\ and \flowmppi\ that were found in this grid search.
However, we do not performance covariance adaptation on the latent Gaussian and assume the flow learns how to adjust sample spread as needed.
Additionally, we take the previous mean in the control space, shift it forward, and add it to the set of control samples at the next time steps.
While the learned latent shift model handles this well in most cases, we found adding this sample sped up training and slightly improved performance.
We always use Halton sequences to sample from the latent Gaussian, but never use B-splines.
For \flowmppi, we always use half the samples for sampling from the NF and half for the Gaussian perturbations on the current control-space mean.
We do perform covariance adaptation on the perturbation Gaussian and re-tune its initial covariance.
Finally, we have the additional $\lambda$ parameter, which penalizes the latent Gaussian samples from deviating too much from the projection of the current control-space mean into the latent space.

\vspace{-1ex}
\paragraph{Planar Robot Navigation.}
The planar navigation environment has a state space of $x_t \in \real^4$, which consists of the robot's 2D position, $(p_x, p_y)$, and velocity, $(v_x, v_y)$, and a control space of $u_t \in \real^2$, which are the robot's 2D acceleration commands.
The robot has double-integrator dynamics with additive Gaussian noise on the controls, as described by the following equations:
\begin{equation}
\begin{bmatrix}
p_x \\ p_y \\ v_x \\ v_y
\end{bmatrix}_{t+1}
=
\begin{bmatrix}
1 & 0 & \Delta t & 0 \\
0 & 1 & 0 & \Delta t \\
0 & 0 & 1 & 0 \\
0 & 0 & 0 & 1
\end{bmatrix}
\begin{bmatrix}
p_x \\ p_y \\ v_x \\ v_y
\end{bmatrix}_t
+
\begin{bmatrix}
0 & 0 \\
0 & 0 \\
\Delta t & 0 \\
0 & \Delta t 
\end{bmatrix}
(u_t + w_t),\quad
w_t \sim \mathcal{N}(0, \sigma \eye),
\end{equation}
where we set $\Delta t=0.1$ and $\sigma=1$. 
Additionally, we added acceleration limits of $\barbelow{u}=-10$ and $\bar{u}=10$ for both directions.
The cost function consists of the Euclidean distance to the goal, a signed-distance field representation of the obstacles, and a term which encourages the robot to stay within the bounds of the map, and a quadratic control penalty:
\begin{equation}
c(x, u) = w_{goal} ||x - x_g||_2^2 + w_{bound} c_{bound}(x_t) + w_{coll} \text{SDF}(p_x, p_y) + w_{ctrl} ||u||_2^2,
\end{equation}
where we define the map bound cost as:
\begin{equation}
c_{bound}(x) = \sum_{i \in (x, y)}
\ind{(p_i > \bar{p}_i)\ ||\ (p_i < \barbelow{p}_i)}
\min{\big( (p_i - \bar{p}_i)^2, (p_i - \barbelow{p}_i)^2 \big)}.
\end{equation}
In the above equations, $x_g$ is the goal state, $\bar{p}_i$ and $\barbelow{p}_i$ are the upper and lower bounds of the map for each coordinate, and $\text{SDF}(\cdot, \cdot)$ indexes an image which represents the signed-distance field.
We mainly focus on the \pnranddyn\ task, which involves steering the robot towards a goal position while avoiding eight dynamic obstacles.
While the obstacles drift randomly in the environment with Gaussian steps, their positions are clipped to be within map bounds.
We do not update their position if the perturbation would bring it too close to the robot or goal location to prevent collisions which the robot cannot react to in time to avoid.
An episode lasts for 200 time steps and is considered successful if the agent reaches the goal without colliding into any obstacles.
In \Cref{app:pn_exp}, we also consider a static version of this environment (\pnrand), which simply places the eight obstacles randomly in the environment, and a version which arranges the obstacles in a fixed grid (\pngrid).

\vspace{-1ex}
\paragraph{Franka Panda Arm.}
The Franka Panda arm environment defines the robot state in joint space with $x_t \in \real^{21}$, consisting of each joint's angle $\theta_i$, angular velocity $\dot{\theta}_i$, and angular acceleration $\Ddot{\theta}_i$.
Its control space is $u_t \in \real^7$, which are the angular acceleration commands for each joint.
The dynamics are deterministic and implemented by the Nvidia Isaac Gym simulator \citep{makoviychuk2021isaac}.
However, the MPC controllers use a simpler kinematic model defined by \citet{bhardwaj2021fast}, which is implemented in a batch fashion by leveraging its linearity:
\begin{equation}
\bm{\Ddot{\Theta}} = \bm{u},\quad
\bm{\dot{\Theta}} = 
\dot{\Theta}_t + S_l(1) \diag{\Delta t} \bm{\Ddot{\Theta}},\quad
\bm{\Theta} = 
\Theta_t + S_l(1) \diag{\Delta t} \bm{\dot{\Theta}},
\end{equation}
where the bold symbols indicate that they consist of values along the entire horizon, $\bm{\Theta}$ includes the angles for all joints, $S_l(1)$ is a lower triangular matrix filled with 1, and $\Delta t$ is a vector of time steps across the horizon. 
We use smaller time steps earlier along the horizon and larger ones for later time steps.
By implementing the dynamics in batch, we avoid iteratively unrolling the dynamics, speeding up controller computation significantly.
For computing cost, we also require the Cartesian poses $X$, velocities $\dot{X}$, and accelerations $\Ddot{X}$ of the end-effector.
These are obtained via:
\begin{equation}
X = \text{FK}(\Theta),\quad
\dot{X} = J(\Theta)\dot{\Theta},\quad
\Ddot{X} = \dot{J}(\Theta) \dot{\Theta} + J(\Theta) \Ddot{\Theta}
\end{equation}
where $\text{FK}(\Theta)$ are the forward kinematics and $J(\Theta)$ is the kinematic Jacobian.
The cost function is a weighted sum of a number of terms, as defined by \citet{bhardwaj2021fast}, which includes: distance of end-effector to the goal pose $c_{pose}$, a time varying velocity limit cost $c_{stop}$ that enables stopping within the specified horizon, a joint limit cost $c_{joint}$, a manipulability cost $c_{manip}$ which encourages the arm to avoid singular configurations, a self-collision cost $c_{self-coll}$, and a obstacle collision cost $c_{coll}$.
The overall final cost function is then:
\begin{equation}
\small
c(x, u) = w_{p} c_{pose}(x) + w_{s} c_{stop}(x) + w_{j} c_{joint}(x) + w_{m} c_{manip}(x) + w_{c} (c_{self-coll}(x) + c_{coll}(x)).
\end{equation}
The self-collision cost is implemented with a neural network that predicts the closest distance between the links of the robot given a configuration.
The collision cost is a binary cost which uses a learned collision checking function that operates directly on raw point cloud data and classifies if a robot link is in collision.
See \citet{bhardwaj2021fast} for further details about each of the individual cost terms.
For the \frankaobs\ task, we control the 7 degree-of-freedom (DOF) Franka Panda robot arm and steer it towards a target goal from a fixed starting pose while avoiding a single pole obstacle.
The obstacle and goal positions are randomized at the beginning of each episode, which lasts for 600 time steps.
An episode is considered successful if the end effector reaches the target position under the time constraints while avoiding the obstacle.
In \Cref{app:franka_exp}, we also consider a simplified version which has no obstacles (\franka).

\vspace{-1ex}
\paragraph{Architectural and Training Details.}
All NFs for both \nfmpc\ and \flowmppi\ were implemented in PyTorch and contain affine coupling layers which use multilayer perceptrons (MLPs) for both the scale and translation terms.
The scale and translation networks use Tanh and ReLU activations, respectively.
We also employ layer normalization \citep{ba2016layer} in these networks to help prevent overfitting.
Interestingly, we found that adding batch normalization between each layer, as proposed by \citet{dinh2016density}, actually hurt performance and was therefore excluded.
In all environments, we use 5 RealNVP blocks for the NF, and each MLP has a hidden dimensionality of 128 neurons.
For the \pngrid, \franka, and \frankaobs\ tasks, the shift model is also an MLP with a single hidden layer of 128 neurons and a ReLU activation function.
In \pnrand\ and \pnranddyn\, the shift model is implemented as an LSTM with a hidden dimensionality of 128 neurons.
For a fair comparison, the same architecture was used for both \flowmppi\ and \nfmpc.
We trained all networks with the Adam optimizer \citep{kingma2014adam} using a learning rate of $10^{-4}$.

\begin{wrapfigure}{r}{0.54\textwidth}
\vspace{-2.5ex}
\begin{minipage}{0.54\textwidth}
\begin{algorithm}[H]
\SetAlgoLined
\small
\KwIn{Environment dist. $\Ccal$, initial state dist. $\rho$, initial param. $\thetatilde_0$, $\lambda_0$}
\KwParam{\# episodes $D$, episode length $T$, \# samples $N$}

\For{$d = 1, 2, \dots, D$}{
    Sample environment $c \sim \Ccal(\cdot)$ \\
    Sample initial state $x_0 \sim \rho(\cdot | c)$ \\
    Initialize episode loss $l_d$ $\leftarrow$ $0$ \\
    \For{$t = 0, 1, \dots, T_k-1$}{
        $(\uhatbold_t, \zhatbold_t, w_t)^{(1:N)}$ $\leftarrow$ Rollout($x_t$, $c$, $\thetatilde_t$, $\lambda_d$) \\
        Update $\thetatilde_t$ to $\theta_t$ using latent MPPI update \\
        Sample $u_t \sim \pi_{\theta_t, \lambda_d}$ or $u_t \leftarrow h_{\lambda_d}^{-1}(\mu_t; c)$ \\
        Apply control to system $x_{t+1} \sim f(x_t, u_t)$ \\
        Shift parameters $\thetatilde_{t+1}$ $\leftarrow$ $\Phi_{\lambda_d}(\theta_t, c)$ \\
        Compute loss $\Jhat_t(\theta_t, \lambda_d; x_t, c)$\\
        Accumulate loss $\ell_d \leftarrow \ell_d + \Jhat_t$
    }
    Update $\lambda_d$ to $\lambda_{d+1}$ with $\nabla_\lambda \ell_d$ using SGD
}
\caption{Training Loop}
\label{alg:training}
\end{algorithm}
\end{minipage}
\vspace{-4ex}
\end{wrapfigure}
To train the \nfmpc\ variants, we following the training procedure in Algorithm \ref{alg:training}, which is carried out training over $D$ episodes.
In each episode $d$, we sample an environment from $\Ccal$ and initial state from $\rho$.
We then perform our rollouts by sampling latent controls $\zhatbold_t^{(1:N)}$, passing them through the normalizing flow to get controls $\uhatbold_t^{(1:N)}$, and then applying them to our approximate dynamics model and cost function to get weights $w_t^{(1:N)}$.
These variables are used to update the latent distribution of the policy to $\theta_t$.
Next, we can either sample a control from the policy or use a control corresponding to the latent mean.
We apply this control to the true system and shift the latent distribution parameters forward with the shift model.
Finally, we compute the loss for the current time step, accumulate the loss to our running sum, and repeat for $T$ time steps.
Once the episode is complete, we update $\lambda_d$ using the gradient of the loss and carry on to the next episode.
Every 100 environments, we test the controller on 10 held out environments and save the current model if it outperforms the previous best on this validation set.
While this introduces a variable number of total episodes used to train the models, we generally find convergence between 2000 and 7000 episodes.

In contrast, to train \flowmppi, we do not actually run an episode.
Instead, we generate the environment and take a gradient step on the initial distribution of the flow conditioned on the environmental information.
We train \flowmppi\ over 10000 randomly generated environments for each task.
We achieved the best performance by initializing the flow with the \flowmppi\ solution, and then refining the flow and learning the shift model jointly as above.
When we condition the NF on additional information, this is simply appended to the current input of each shift and translation network directly.
This conditional information includes start and goal locations for \pngrid\ and \franka\ and start, goal, and obstacle locations for \pnrand\ and \frankaobs.
For \pnranddyn\, we use the current state instead of the initial location, which was found to improve performance for all controllers.
For the obstacles, the conditional information is the Cartesian coordinates of each object of interest stacked together in a vector.

\vspace{-1ex}
\subsection{Additional Planar Navigation Experiments}
\label{app:pn_exp}

\begin{figure*}[t]
    \centering
    \includegraphics[width=\textwidth]{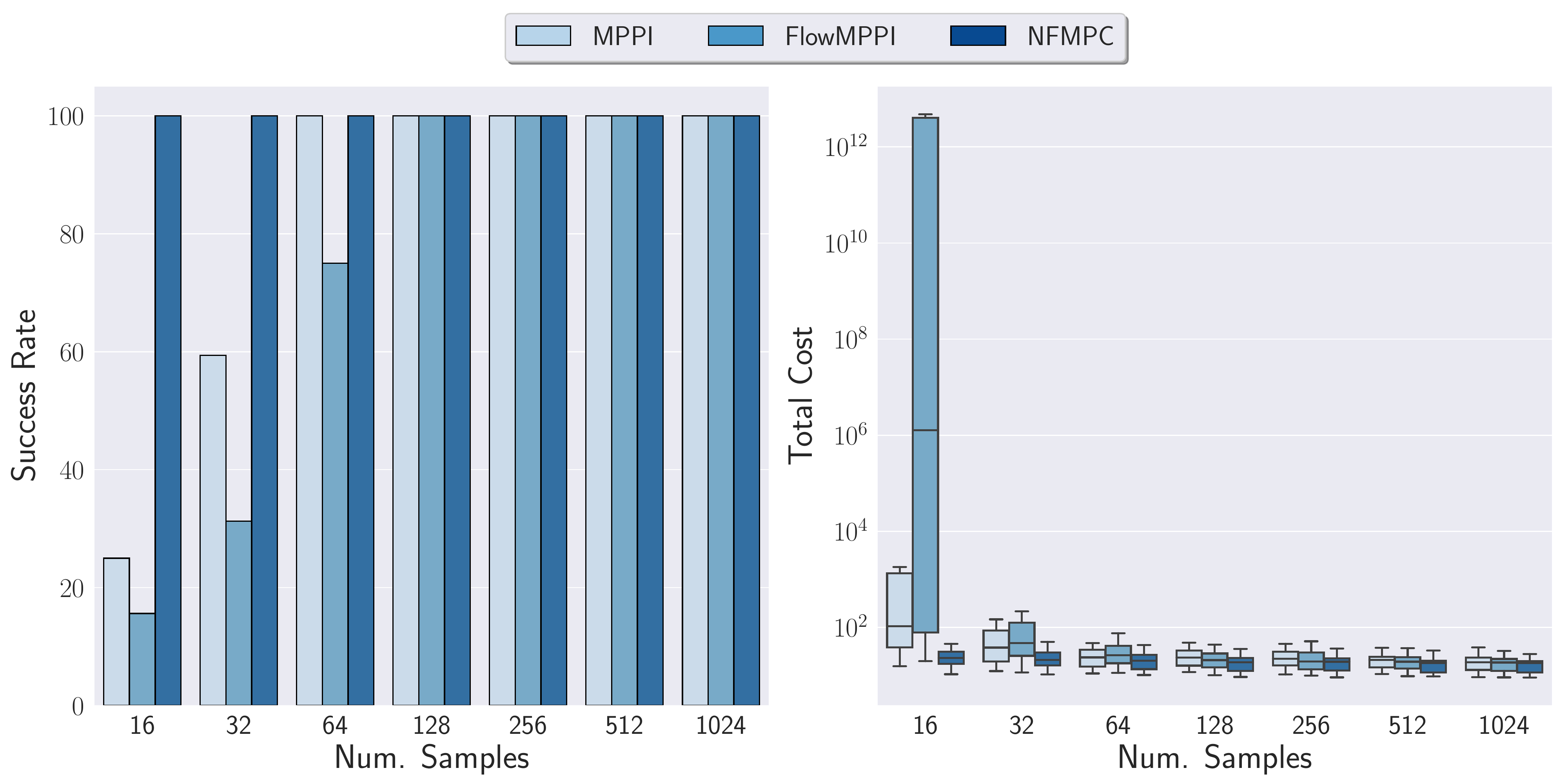}
    \caption{Success rate and cost distribution on the \pngrid\ task across a different number of samples.}
     \label{fig:spngrid_barchart}
     \vspace{-1ex}
\end{figure*}

\begin{figure*}[t]
    \centering
        \begin{minipage}[t]{0.32\textwidth}
            \centering
            \includegraphics[width=\textwidth]{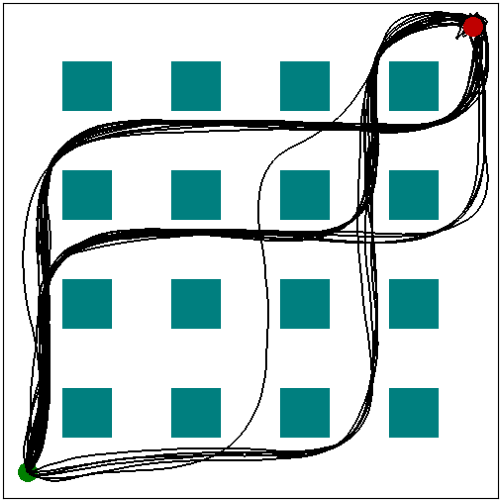}
            NFMPC
        \end{minipage}
        \begin{minipage}[t]{0.32\textwidth}
            \centering
            \includegraphics[width=\textwidth]{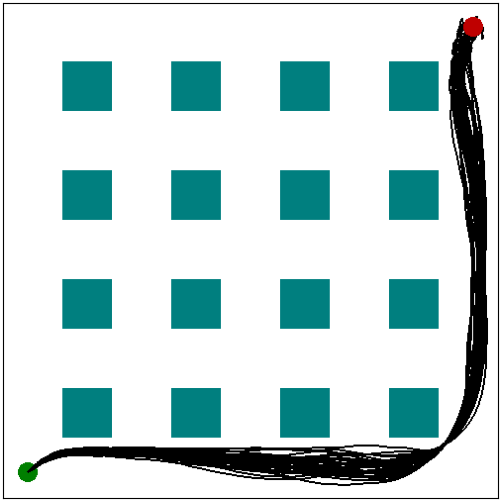}
            FlowMPPI
        \end{minipage}
        \begin{minipage}[t]{0.32\textwidth}
            \centering
            \includegraphics[width=\textwidth]{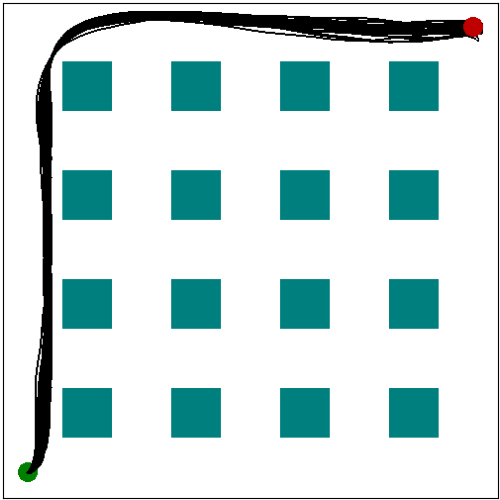}
            MPPI
        \end{minipage}
    \caption{Visualization of trajectories in the \pngrid\ task across multiple random seeds for a fixed environmental layout.}
    \vspace{-4ex}
    \label{fig:spngrid_multitraj}
\end{figure*}

\begin{figure*}[t]
    \begin{minipage}[t]{\textwidth}
        \raisebox{3.8ex}{\rotatebox[origin=t]{90}{NFMPC}}
        \includegraphics[width=0.98\textwidth]{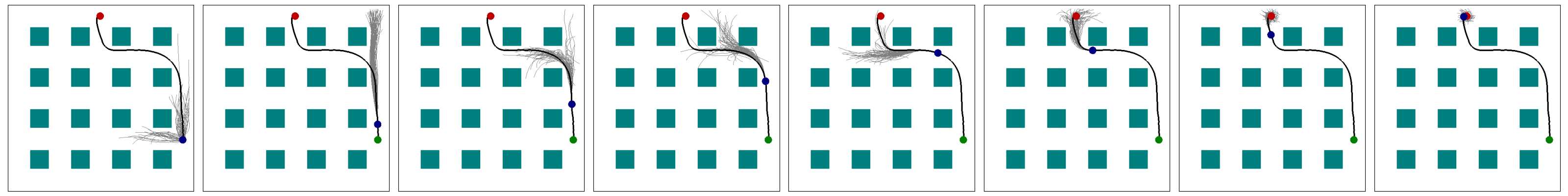}
    \end{minipage}
        \begin{minipage}[t]{\textwidth}
        \raisebox{3.8ex}{\rotatebox[origin=t]{90}{FlowMPPI}}
        \includegraphics[width=0.98\textwidth]{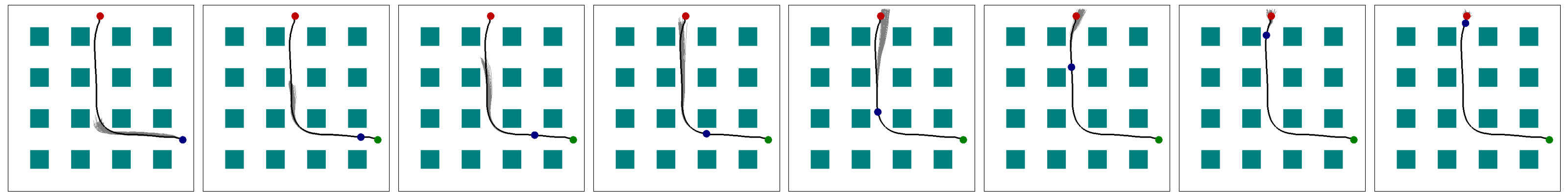}
    \end{minipage}
        \begin{minipage}[t]{\textwidth}
        \raisebox{3.8ex}{\rotatebox[origin=t]{90}{MPPI}}
        \includegraphics[width=0.98\textwidth]{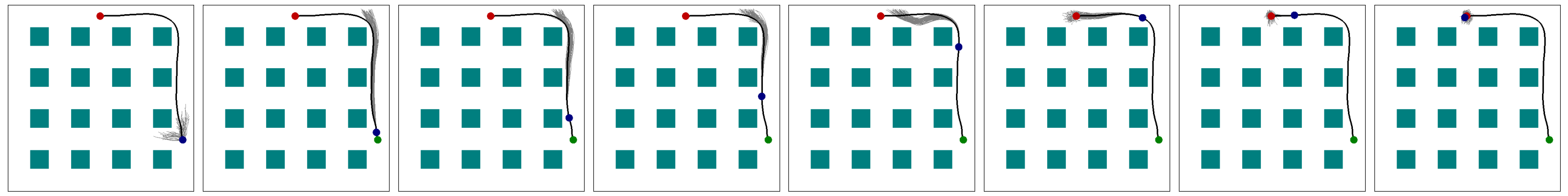}
    \end{minipage}
    \caption{Visualization of a trajectory and top samples from (top) \nfmpc, (middle) \flowmppi, and (bottom) \mppi\ on the \pngrid\ task.}
    \label{fig:spngrid_trajectories}
    \vspace{-4ex}
\end{figure*}

\paragraph{\pngrid.}
We consider a variant of the planar navigation task that involves static obstacles arranged in a grid (\pngrid), described in \Cref{app:exp_details}.
All other task details are the same as in the \pnranddyn\ task, except that we use an unconditional \nfmpc\ controller.
We quantitatively compare all controllers in \Cref{fig:spngrid_barchart} and find that \nfmpc\ consistently matches or outperforms both \mppi\ and \flowmppi\ at each sample quantity in terms of both success rate and average trajectory cost of successful trajectories.
We also find that \nfmpc\ scales more gracefully than \mppi\ as the number of samples is reduced.
In fact, we found that while \flowmppi\ improves over \mppi\ at higher sample counts, it actually performs significantly worse with fewer samples.
This is in contrast to the results on more complex environments considered in the main paper, in which \flowmppi\ generally outperforms \mppi\ at lower sample counts as well.
This is potentially because in the standard implementation of \flowmppi, half of the samples come from the NF and the other half are Gaussian perturbations of the current control-space mean. 
Initially, the samples coming from the NF provide a good initialization for the control distribution mean. 
However, as the latent Gaussian distribution used by the NF is never updated, half of our samples are always coming from this same distribution. 
When the environment or task is more complex, the conditioning information provided to the NF is enough to transform the samples in a useful way.
However, in this simple environment, our hypothesis is that as the robot moves in the environment, these samples may cease to be as useful or informative. 
We then have to rely on the other half of samples coming from Gaussian perturbations of the control-space mean to do most of the work. 
As such, we effectively have half the budget of samples to work with than \mppi\ would, as the samples from the NF potentially do not provide much useful information. 
Meanwhile, because \nfmpc\ is trained recurrently and updates are performed in the latent space, it can better exploit structure in the environment to transform samples.

To better understand what \nfmpc\ is doing differently, we superimpose 32 different trajectories with fixed start and goal positions using each controller in \Cref{fig:spngrid_multitraj}.
We find that both \mppi\ and \flowmppi\ always select the same path through the environment.
Meanwhile, \nfmpc\ is able to discover different paths through the environment, allowing it to better react to the stochastic perturbations that knock it off the current plan and improve performance.
Additionally, we visualize the resulting trajectories and top samples drawn from the distributions for all controllers on one of the validation environments in \Cref{fig:spngrid_trajectories}.
As we would expect, the initial trajectory from \flowmppi\ is better than those of the other two controllers, which basically lay in straight lines in front of the robot.
However, \nfmpc\ is able to discover a slightly faster route to the goal as it proceeds in the environment.
The baseline \mppi\ controller takes a similar, but slightly longer, route to the goal.
Additionally, it takes a longer time to ramp up its velocity compared to the other two controllers, contributing to its sub-optimal performance.

\begin{figure}
    \centering
    \includegraphics[width=\textwidth]{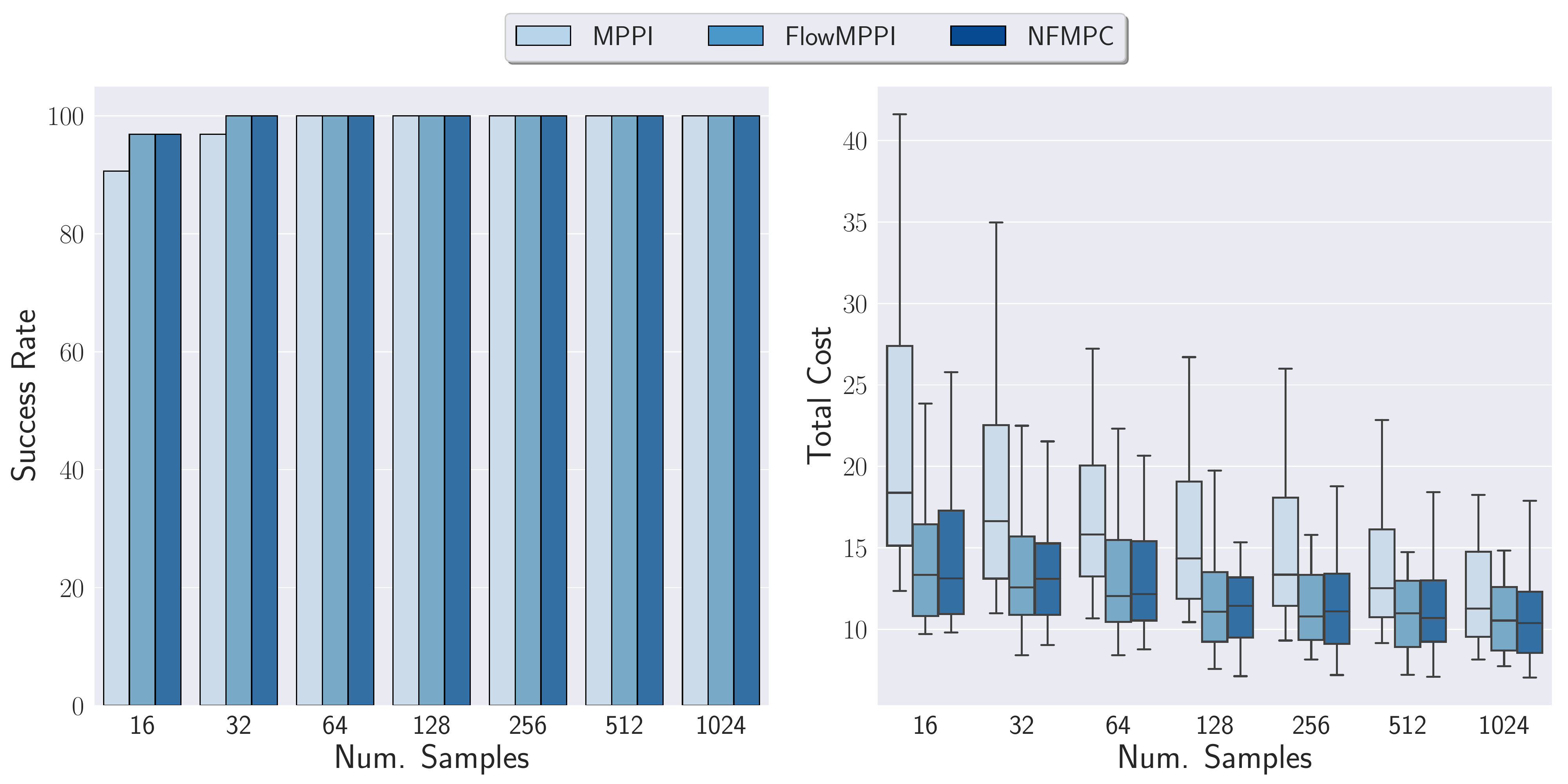}
    \hfill
    \vspace{-3ex}
    \caption{Success rate and cost distribution on the \pnrand\ environment across a different number of samples.}
    \vspace{-2ex}
    \label{fig:spnrand_barchart}
\end{figure}

\begin{figure*}[t]
    \begin{minipage}[t]{\textwidth}
        \raisebox{3.8ex}{\rotatebox[origin=t]{90}{NFMPC}}
        \includegraphics[width=0.98\textwidth]{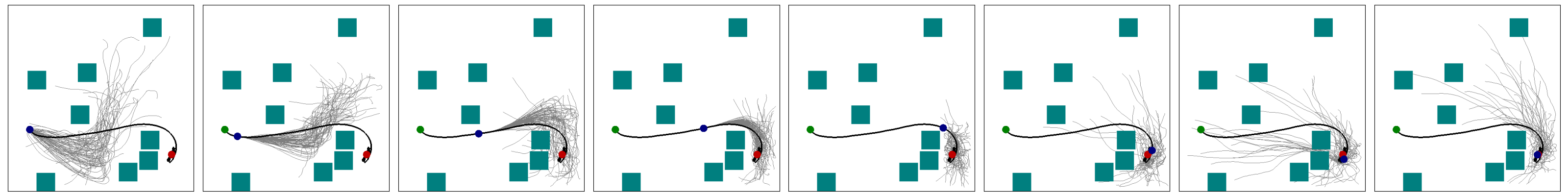}
    \end{minipage}
    \begin{minipage}[t]{\textwidth}
        \raisebox{3.8ex}{\rotatebox[origin=t]{90}{FlowMPPI}}
        \includegraphics[width=0.98\textwidth]{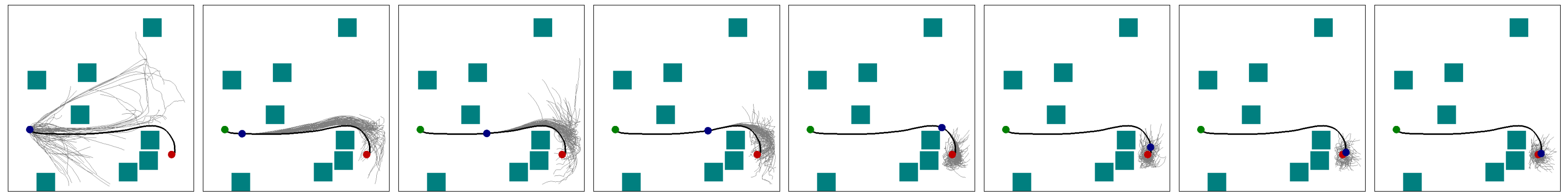}
    \end{minipage}
    \begin{minipage}[t]{\textwidth}
        \raisebox{3.8ex}{\rotatebox[origin=t]{90}{MPPI}}
        \includegraphics[width=0.98\textwidth]{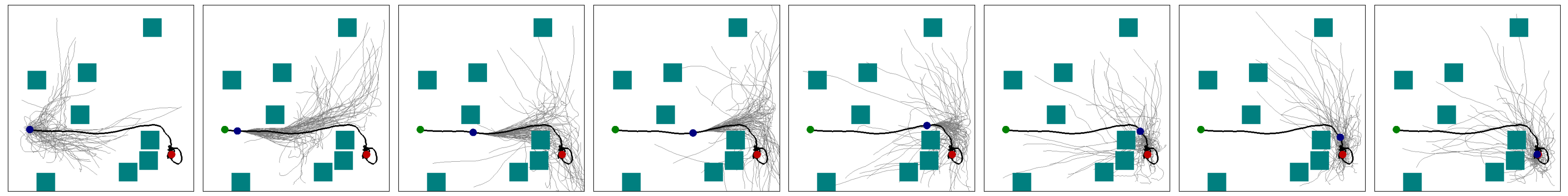}
    \end{minipage}
    \caption{Visualization of a trajectory and top samples from (top) \nfmpc, (middle) \flowmppi, and (bottom) \mppi\ on the \pnrand\ task.}
    \label{fig:spnrand_trajectories}
    \vspace{-2ex}
\end{figure*}

\vspace{-2ex}
\paragraph{\pnrand.}
Next, we consider a variant of \pnranddyn\ in which the eight obstacles are static (\pnrand).
We consider the case where we condition the NF on obstacle locations, initial state, and goal position.
However, it is important to note that we do not condition the shift model, as we found this consistently hurt performance.
In \Cref{fig:spnrand_barchart}, we display the quantitative results and find that \nfmpc\ performed about on par with \flowmppi, which outperformed \mppi.
Unlike on the \pngrid\ task, both \nfmpc\ and \flowmppi\ scale similarly with a reduction of samples and better than \mppi.
This can be partly attributed to the fact that the obstacles are more spaced out and there are more "holes" in the environment than in the grid.
Therefore, it is easier to avoid collisions, possibly contributing to the higher success rates when the controller has access to fewer samples.
Moreover, conditioning on the obstacle locations provides the NF more information, which can be exploited without updating the latent distribution.

We again visualize the trajectories and top samples for all controllers on a validation environment in \Cref{fig:spnrand_trajectories}.
First, we note that the samples in \flowmppi\ appear to be better spread around in the environment to search for good paths towards the goal.
Meanwhile, \nfmpc\ seems to have all top samples directed in one direction.
All models seem to find the same path, with \mppi\ oscillating more near the goal and reaching the goal more slowly than \nfmpc\ and \flowmppi.
The similar performance of \nfmpc\ and \flowmppi\ can be partially attributed to the fact that all we may really need to succeed in this environment is a good initial trajectory which steers us towards the goal.
Whether we refine this trajectory with a learned latent shift model or with Gaussian perturbations in the control space does not appear to make much difference in performance.
However, in \pnranddyn, we demonstrated that there is a significant advantage to our approach, indicating that dynamic environments may be a good application for \nfmpc.

\begin{figure}
    \centering
    \includegraphics[width=\textwidth]{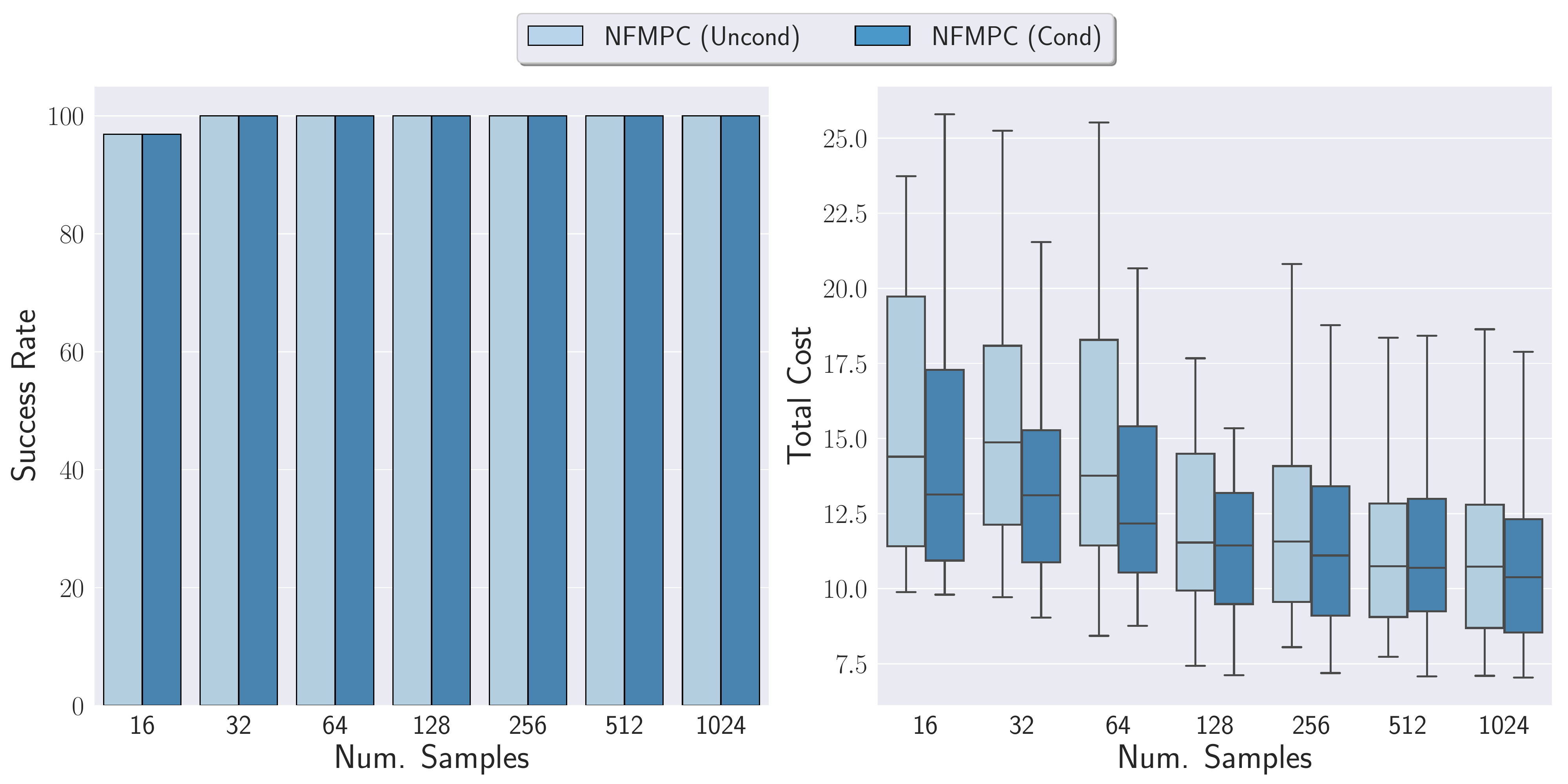}
    \hfill
    \vspace{-3ex}
    \caption{Comparison of unconditional and conditional models on the \pnrand\ environment across a different number of samples.}
    \vspace{-2ex}
    \label{fig:spnrand_barchart_cond}
\end{figure}

Finally, in order to evaluate the benefit of conditioning the NF, we compare the performance of \nfmpc\ with and without conditioning the flow on \pnrand\ in \Cref{fig:spnrand_barchart_cond}.
We find that the conditional model consistently outperforms the unconditional model in terms of median cost, with the gap growing at reduced sample counts.
This may be because the dynamics in \pnrand\ are rather simple, and there may not be much general structure for the unconditional model to exploit since the obstacle locations are entirely random.
Therefore, while the unconditional model does fairly well, conditioning the flow, and not the shift model, seems to enable further gains, even in this simple task.

\begin{figure}
    \centering
    \includegraphics[width=\textwidth]{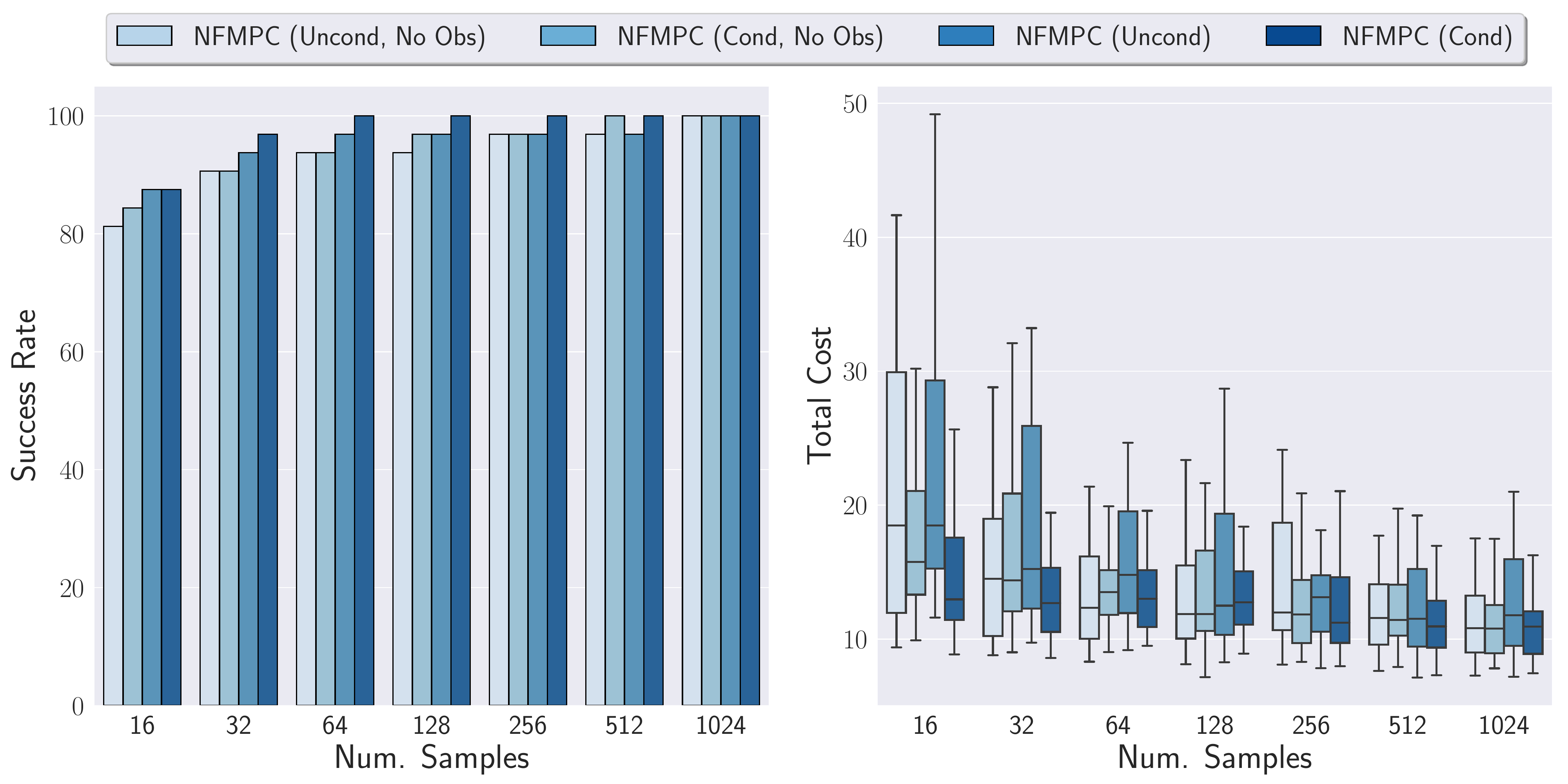}
    \hfill
    \vspace{-3ex}
    \caption{Comparison of unconditional and conditional models on the \pnranddyn\ environment across a different number of samples when trained in an environment with no obstacles (\pnrand) and retrained with obstacles present.}
    \vspace{-2ex}
    \label{fig:spnranddyn_barchart_con_retrain}
\end{figure}

\vspace{-1ex}
\paragraph{\pnranddyn.}
In addition to the experiments in the main paper, we also performed additional ablation studies.
Specifically, we also considered training an unconditional model, as we did before in the \pnrand\ task.
Moreover, we explored transferring controllers trained in the \pnrand\ task to this dynamic version of the environment.
We plot the quantitative results from these ablation studies in \Cref{fig:spnranddyn_barchart_con_retrain}.
Again, we find that conditional models consistently outperform unconditional ones, with the gap in performance more pronounced.
Not only is there a reduction in median cost for unconditional controllers, but it sometimes fails in environments in which its conditional counterpart succeeds.
Transferring the models trained in \pnrand\ works surprisingly well at 1024 samples.
However, at lower sample counts, there is a more pronounced difference in the transferred controllers to those specifically trained in the dynamic environment.
Therefore, it appears that controllers are more efficient at exploring the environments they are trained on, and unsurprisingly, using more samples can partially overcome this gap.


\vspace{-1ex}
\subsection{Additional Franka Experiments}
\vspace{-1ex}
\label{app:franka_exp}

\begin{figure}
    \centering
    \includegraphics[width=\textwidth]{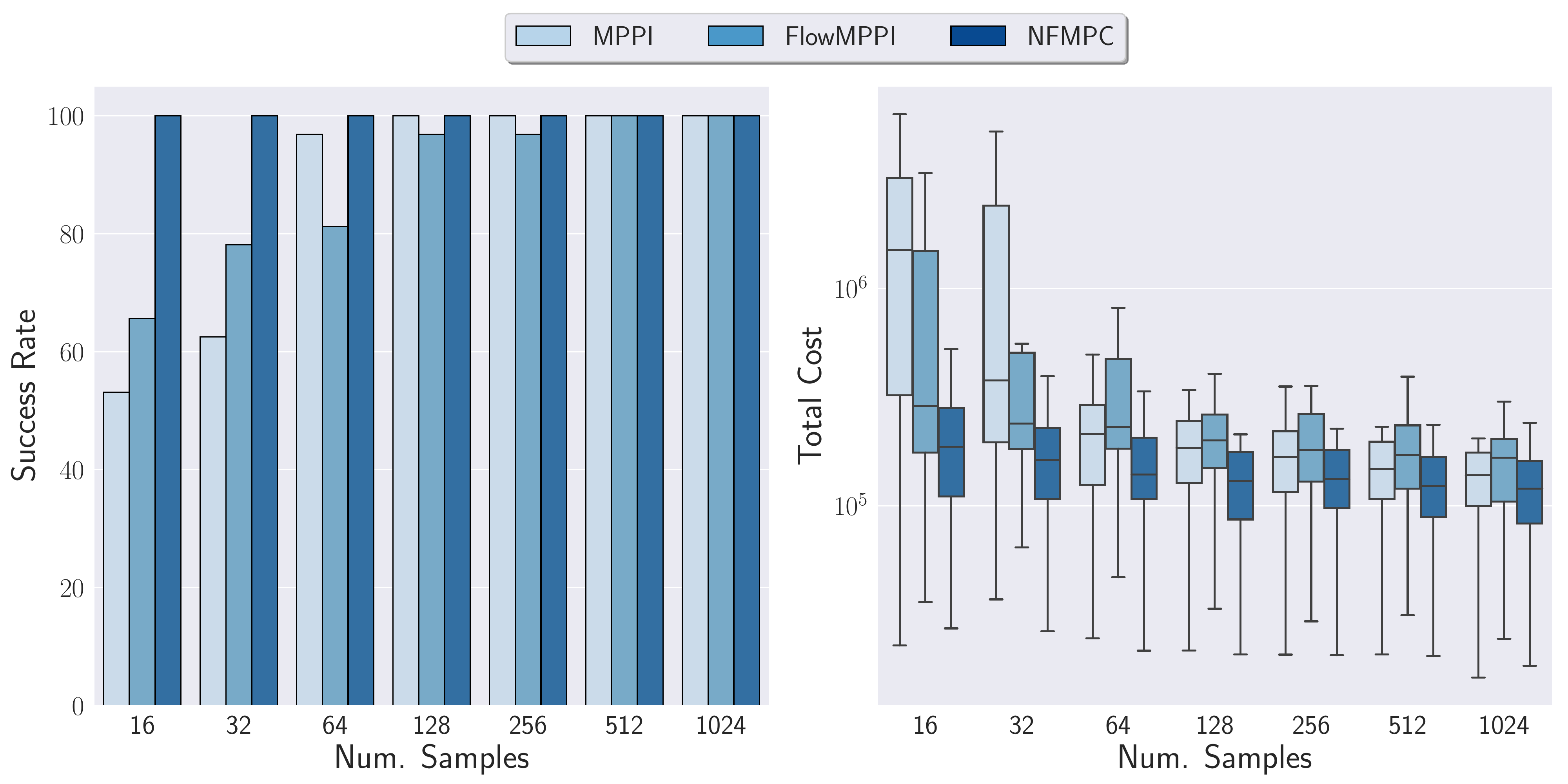}
    \caption{Success rate and cost distribution on the \franka\ environment across a different number of samples.}
    \vspace{-2ex}
    \label{fig:franka_barchart}
\end{figure}

\paragraph{\franka.} We consider a variant of the \frankaobs\ task which involves no obstacles, just a target goal, which we call \franka.
We plot our quantitative results in \Cref{fig:franka_barchart} and find that \nfmpc\ again consistently matches or outperforms \mppi\ and \flowmppi\ at each sample amount.
In fact, \nfmpc\ always achieves a $100\%$ success rate at all sample counts, while both \mppi\ and \flowmppi\ significantly drop in performance at lower amounts of samples.
Moreover, \flowmppi\ sometimes actually performed worse than \mppi, indicating that conditioning on just goal location does not help as much in this more complex scenario.

\begin{figure}
    \centering
    \includegraphics[width=\textwidth]{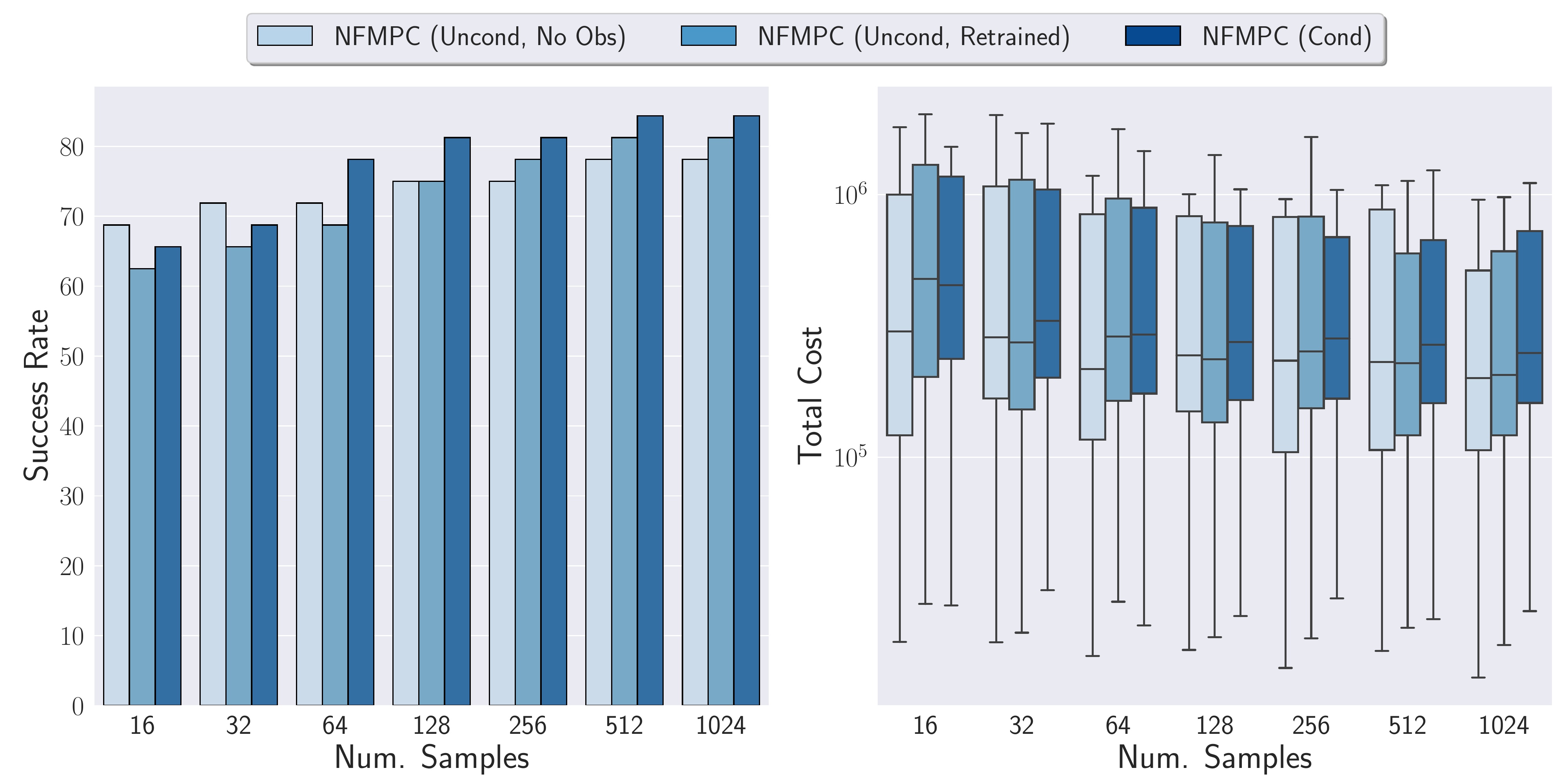}
    \caption{Comparison of an unconditional model trained with and without obstacles to a conditional model in the \frankaobs\ environment across a different number of samples.}
    \vspace{-2ex}
    \label{fig:franka_obs_cond_barchart}
\end{figure}

\vspace{-1ex}
\paragraph{\frankaobs.}
In addition to the results in the main paper, we perform an additional ablation study in which we again compare an unconditional and conditional model on the \frankaobs\ environment in \Cref{fig:franka_obs_cond_barchart}.
We also show the performance of transferring the unconditional model trained on the \franka\ environment without obstacles to an environment which contains the single pole obstacle.
At 1024 samples, the conditional model performs the best in terms of success rate.
Surprisingly, the unconditional model trained without obstacles performs best in terms of median cost and scales better to fewer samples.
Therefore, while the conditional model more often finds a feasible path to the goal, the unconditional model is better able to exploit structure across environments to find lower cost trajectories.
One possible explanation is that because the unconditional model is trained without knowing the specific obstacle locations, it has to be more robust to variation in the environment.
This also shows that transferring the learned distribution to novel environments is possible.
However, since we only have a single static obstacle, it is not clear if these findings would generalize to more complex environments.

\vspace{-2ex}
\paragraph{Breakdown of Trajectory Cost.}
We would like to better understand how the learned controllers improve upon the baseline on the \frankaobs\ task.
As discussed in \Cref{app:exp_details}, the cost function for the Franka tasks is composed of multiple terms.
By inspecting the averages for each term, we hope to gain insight into the learned sampling distributions.
Briefly, we consider a manipulability cost (Manip), which encourages the arm to avoid singular configurations, a self-collision cost (Self), an obstacle collision cost (Obstacle), and a distance-to-goal cost (Goal).
\begin{wrapfigure}{r}{0.5\textwidth}
\centering
\vspace{-0ex}
\includegraphics[width=0.48\textwidth]{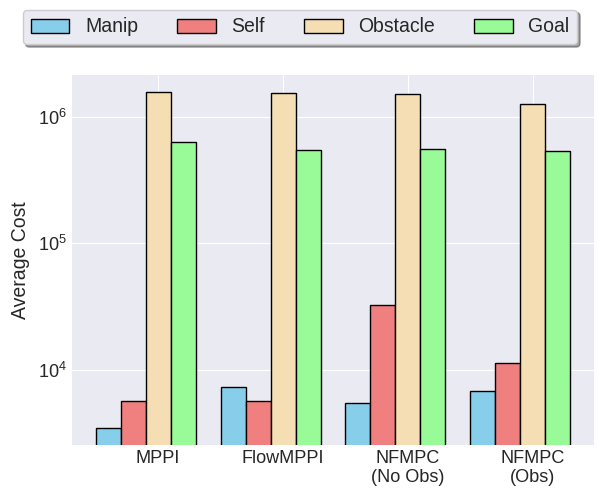}
\caption{Breakdown of different cost terms for each controller executing the \frankaobs\ task.}
\label{fig:cost_breakdown}
\vspace{-2ex}
\end{wrapfigure}
The cost breakdown for these terms in shown in \Cref{fig:cost_breakdown}.
The baseline \mppi\ controller achieves the lowest manipulability cost.
Since this term is not weighted as highly, it makes sense that training the normalizing flow would focus on minimizing terms which were more heavily weighted.
Meanwhile, \flowmppi\ achieves the lowest self-collision cost, which may be due to the fact that it starts off with a better initial plan.
However, \nfmpc\ trained with obstacles (\nfmpco) achieves the lowest obstacle collision and distance-to-goal cost.
One possible explanation for this improvement is that, because we train \nfmpc\ with BPTT, we are potentially able to account for errors which arise due to the inaccurate model.
Additionally, learning the shift model may be an important component to this improvement, which we explore below.
Finally, we note that \nfmpco\ is better than \nfmpc\ trained without obstacles (\nfmpcno) in all cost terms except the manipulability cost, as it is in distribution for the task.

\begin{figure}
    \centering
        \includegraphics[width=\textwidth]{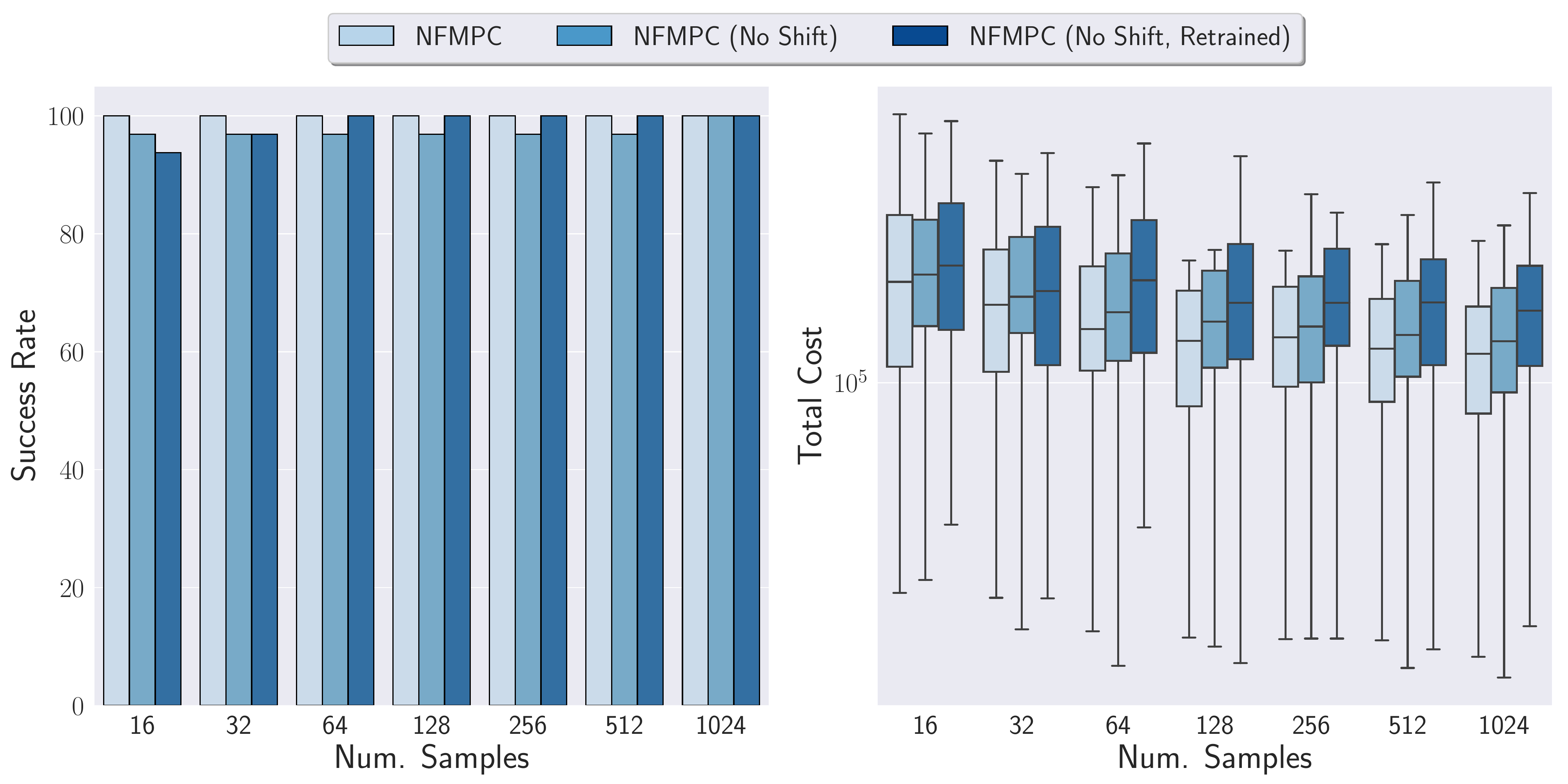}
    \caption{Success rate and cost distribution on the \franka\ task with (\nfmpc) and without (\nfmpcns\ and \textbf{NFMPC (No Shift, Retrained}) the learned shift model.}
    \vspace{-3ex}
    \label{fig:shift_model}
\end{figure}

\vspace{-1ex}
\paragraph{Shift Model Ablation.}
\label{sec:eval_shift}
One of our main contributions which sets us apart from prior work is that we learn a latent shift model and train the controller as a recurrent network.
To determine the impact of this choice, we consider an alternate scenario where the control sequence is instead shifted in control space.
That is, we shift the mean in control space, as in the baseline \mppi\ controller, and then run the flow backwards to infer the corresponding latent mean, which is used to bootstrap the next iteration.
We consider two cases: 1) taking a pre-trained controller and removing the shift model (\nfmpcns) and 2) retraining the controller entirely from scratch without the shift model (\textbf{NFMPC (No Shift, Retrained)}).
In \Cref{fig:shift_model}, we show the results of this ablation study on the \franka\ task.
Removing the shift model strictly hurts performance, even when we retrain the normalizing flow from scratch.
Moreover, retraining the flow actually results in worse performance than simply removing the shift model from the pre-trained controller.
This implies that training the entire controller with BPTT allows it to discover lower-cost trajectories to the goal.

\vspace{-1ex}
\subsection{Performance Overhead of Normalizing Flow}
\vspace{-0.5ex}
\label{app:timing}

We measured the average change in wall clock time across different amounts of samples for \nfmpc\ and \flowmppi\ compared to the baseline \mppi\ implementation on our NVIDIA Titan V GPU. 
For the Planar Robot Navigation and Franka Panda Arm experiments, the average change is $1.61\times$ and $1.91\times$, respectively. 
Moreover, compared to \mppi\ with 1024 samples, the change in wall clock time for \nfmpc\ and \flowmppi\ with 16 samples is approximately $1.01\times$ and $1.14\times$, respectively. 
Therefore, the introduction of the normalizing flow (NF) has a notable impact on wall clock time for both methods. 
However, this overhead is expected and does not prohibit either method’s utility in the real world. 
And there are still clear performance advantages of \nfmpc\ over the baselines in terms of success rate and average trajectory cost for a given sample amount.

Furthermore, the performance of \mppi\ reported in the paper and the above timing comparisons are when the optimization is run for a single iteration per time step. 
However, the performance of \mppi\ generally improves with an increased number of iterations at the cost of an increased runtime. 
For instance, we compare the performance of \mppi\ run for 3 iterations per time step with \nfmpc\ run for a single iteration, both using 1024 samples. 
In the \franka\ task, we can reduce the average trajectory cost of MPPI to be only $12\%$ worse than \nfmpc, rather than the previous $19\%$. 
When we introduce obstacles for the \frankaobs task, \mppi\ with 3 iterations can actually match the success rate of \nfmpc, albeit with a worse average trajectory cost. 
However, in this case, \nfmpc\ actually results in an average reduction of wall clock time by $24.8\%$. 
Similarly, for the \pngrid\ task, \mppi\ with 3 iterations reduces the average trajectory cost to be only $10\%$ worse than \nfmpc, rather than the previous $40\%$. 
However, this again comes at the cost of increased runtime, as \nfmpc\ reduces the average wall clock time by $26.8\%$. 
Therefore, \nfmpc\ allows us to surpass the performance of \mppi\ run with more iterations while reducing the required runtime.

Additionally, \nfmpc\ has substantial runtime benefits over \flowmppi\ due to its improved scaling. 
For the \pngrid task, \nfmpc\ with 128 samples outperforms \flowmppi\ with 1024 samples while reducing average runtime by $22\%$. 
Similarly, for the \franka task, \nfmpc\ with 64 samples outperforms \flowmppi\ with 1024 samples while reducing average runtime by $43\%$. 
And for both Franka tasks (\franka\ and \frankaobs), \nfmpc\ with 16 samples outperforms \flowmppi\ with 128 samples while reducing average runtime by $8\%$. 
As such, there are clear runtime benefits for \nfmpc\ over both \flowmppi\ and \mppi\ run for additional iterations. 
Finally, it is also important to note that we did not perform a hyperparameter sweep on the NF, and it may be possible to significantly reduce the size of the network while retaining performance benefits.

\vspace{-1ex}
\subsection{Proof for DMD Update of Latent Parameters}
\label{app:proof_latent_update}
\begin{theorem}
Consider the optimization problem
\begin{equation}
    \theta_t = \argmin_{\theta \in \Theta} \inner{\gamma_t g_t}{\theta} + D_{KL}(\pi_{\theta, \lambda} || \pi_{\thetatilde_t, \lambda})
    \label{eq:appendix_dmd}
\end{equation}
where we define
\begin{equation}
\pi_{\theta, \lambda}(\uhatbold | c) = p_\theta(h_{\lambda}(\uhatbold; c)) \Det{ \pderiv{h_{\lambda}}{\uhatbold} }
\end{equation}
and $h_\lambda$ is an invertible, deterministic transformation, $p_\theta$ a Gaussian factorized as in \Cref{eq:factorized_gaussian}, and $\theta$ is the natural parameters of the Gaussian.
Then the corresponding update to the mean of the latent Gaussian is given by:
\begin{equation}
\mubold_t = (1-\gamma_t^\mu) \mutildebold_t + \gamma_t^\mu \frac{\Expect{\pi_{\thetatilde, \lambda}, \fhat}{e^{ -\frac{1}{\beta} C(\xhatbold_t, \uhatbold_t) } h_\lambda(\uhatbold_t; c)}}{\Expect{\pi_{\thetatilde, \lambda}, \fhat}{e^{ -\frac{1}{\beta} C(\xhatbold_t, \uhatbold_t)} }}
= (1-\gamma_t^\mu) \mutildebold_t + \gamma_t^\mu \frac{\Expect{p_{\thetatilde}, \fhat}{e^{ -\frac{1}{\beta} C(\xhatbold_t, h_\lambda^{-1}(\zhatbold_t; c)) } \zhatbold_t}}{\Expect{p_{\thetatilde}, \fhat}{e^{ -\frac{1}{\beta} C(\xhatbold_t, h_\lambda^{-1}(\zhatbold_t; c))} }}
\end{equation}
\end{theorem}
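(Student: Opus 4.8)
The plan is to reduce the latent DMD problem in \Cref{eq:appendix_dmd} to the \emph{standard} Gaussian MPPI/DMD update, for which the closed-form mean update (\Cref{eq:MPPI_mean_update_approx}) is already established by \citet{wagener2019online}. The crux is to show that both terms of the objective — the proximal (KL) term and the linear term $\inner{\gamma_t g_t}{\theta}$ — depend on $\theta$ only through the latent Gaussian $p_\theta$, so that the flow $h_\lambda$ contributes nothing that depends on $\theta$ and the problem collapses to an ordinary natural-parameter mirror-descent step on $p_\theta$.

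First I would collapse the KL term. Writing the density ratio $\pi_{\theta,\lambda}(\uhatbold|c)/\pi_{\thetatilde_t,\lambda}(\uhatbold|c)$, the Jacobian factors $\Det{\pderiv{h_\lambda}{\uhatbold}}$ are identical in numerator and denominator and cancel, leaving $p_\theta(\zhatbold)/p_{\thetatilde_t}(\zhatbold)$ with $\zhatbold = h_\lambda(\uhatbold; c)$. Substituting $\zhatbold = h_\lambda(\uhatbold; c)$ in the defining integral (so that $\pi_{\theta,\lambda}(\uhatbold|c)\,d\uhatbold = p_\theta(\zhatbold)\,d\zhatbold$ by the change-of-variables formula) then gives exactly
\[
D_{KL}(\pi_{\theta,\lambda} \| \pi_{\thetatilde_t,\lambda}) = D_{KL}(p_\theta \| p_{\thetatilde_t}).
\]
Since $p_\theta$ is a factorized Gaussian with natural parameters $\theta$, this is the Bregman divergence of the Gaussian log-partition function, which is precisely the proximal term used in the standard derivation.

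Next I would handle the linear term. Because $\log \pi_{\theta,\lambda}(\uhatbold|c) = \log p_\theta(h_\lambda(\uhatbold;c)) + \log \Det{\pderiv{h_\lambda}{\uhatbold}}$ and the Jacobian term is $\theta$-independent, the score satisfies $\nabla_\theta \log \pi_{\theta,\lambda}(\uhatbold|c) = \nabla_\theta \log p_\theta(\zhatbold)$. Feeding this into the likelihood-ratio gradient (\Cref{eq:grad_Jhat}) shows that $g_t = \nabla_\theta \Jhat$ has exactly the same functional form as in plain MPPI, but with the latent samples $\zhatbold = h_\lambda(\uhatbold;c)$ playing the role of the controls and the cost evaluated at $\uhatbold = h_\lambda^{-1}(\zhatbold;c)$. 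With both terms now expressed purely through $p_\theta$, \Cref{eq:appendix_dmd} is identical to the ordinary Gaussian DMD step, so I would invoke the result of \citet{wagener2019online} to read off the mean update, which yields the weighted average of latent samples in place of controls, with the effective step size $\gamma_t^\mu$ arising from the Gaussian moment map. The two displayed forms in the statement are then the same quantity written as a control-space expectation under $\pi_{\thetatilde,\lambda}$ and as a latent-space expectation under $p_{\thetatilde}$, related by the identical change of variables (with $C$ evaluated at $h_\lambda^{-1}(\zhatbold;c)$).

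The main obstacle I anticipate is the bookkeeping of the linear/gradient term rather than the KL reduction: one must verify carefully that the $\theta$-independent Jacobian genuinely drops out of the score, so that all $\theta$-dependence localizes to the latent Gaussian and the exponential-family mirror-descent solution of \citet{wagener2019online} applies verbatim. Once that cancellation is confirmed, the remainder is the routine Gaussian computation of the closed-form DMD mean update, together with a second application of the change-of-variables formula to move between the control-space and latent-space expectation forms.
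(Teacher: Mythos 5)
Your proposal is correct and follows essentially the same route as the paper's proof: both arguments hinge on the Jacobian determinant being $\theta$-independent, so that it drops out of the score in the likelihood-ratio gradient and cancels in the KL density ratio, reducing the problem to the standard exponential-family DMD step of \citet{wagener2019online} (whose Proposition 1 the paper invokes, with sufficient statistics $\phi(\zhatbold) = (\zhatbold, \zhatbold\zhatbold^\top)$ yielding the mean update), followed by the same change of variables to relate the control-space and latent-space expectation forms.
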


\begin{proof}
First, we note that
\begin{equation}
g_t = \nabla \Jhat(\thetatilde; x_t) = - \frac{
    \Expect{\pi_{\thetatilde, \lambda}, \fhat}{e^{-\frac{1}{\beta}C(\xhatbold_t, \uhatbold_t)} \nabla_{\thetatilde} \log \pi_{\thetatilde, \lambda}(\uhatbold_t)}
}{
    \Expect{\pi_{\thetatilde, \lambda}, \fhat}{e^{-\frac{1}{\beta}C(\xhatbold_t, \uhatbold_t)}}
},
\end{equation}
and that
\begin{equation}
\log \pi_{\thetatilde, \lambda}(\uhatbold | c) = \log p_{\thetatilde} (h_{\lambda}(\uhatbold; c)) + \log \Big( \Det{\pderiv{h_{\lambda}}{\uhatbold}} \Big).
\label{eq:nf_log_likelihood_appendix}
\end{equation}
Therefore, when computing the gradient of \Cref{eq:nf_log_likelihood_appendix} with respect to $\thetatilde$, we can drop the log-determinant term as it does not depend on $\thetatilde$. 
As such, we are left with the gradient of the latent Gaussian with respect to its natural parameters, or $\nabla_{\thetatilde} \log \pi_{\thetatilde, \lambda}(\uhatbold | c) = \nabla_{\thetatilde} \log p_{\thetatilde} (h_{\lambda}(\uhatbold; c))$.
Taking this gradient of the log-likelihood term with respect to the natural parameters, we have:
\begin{equation}
g_t = - \frac{
    \Expect{\pi_{\thetatilde, \lambda}, \fhat}{e^{-\frac{1}{\beta}C(\xhatbold_t, \uhatbold_t)} \big( \phi(h_{\lambda}(\uhatbold_t; c)) - \phitilde_t \big)}
}{
    \Expect{\pi_{\thetatilde, \lambda}, \fhat}{e^{-\frac{1}{\beta}C(\xhatbold_t, \uhatbold_t)}}
},
\end{equation}
where $\phitilde_t$ is the expectation parameter corresponding to natural parameter $\thetatilde_t$ and $\phi(\cdot)$ is the sufficient statistics of the latent distribution.
We can rewrite the expectations in terms of $p_{\thetatilde}(\cdot)$:
\begin{equation}
g_t = - \frac{
    \Expect{p_{\thetatilde}, \fhat}{e^{-\frac{1}{\beta}C(\xhatbold_t, h_{\lambda}^{-1}(\zhatbold_t; c)))} \big( \phi(\zhatbold_t) - \phitilde_t \big)}
}{
    \Expect{p_{\thetatilde}, \fhat}{e^{-\frac{1}{\beta}C(\xhatbold_t, h_{\lambda}^{-1}(\zhatbold_t; c))}}
},
\end{equation}
Next, looking at the KL divergence term, we can write:
\begin{equation}
\begin{aligned}
D_{KL}(\pi_{\theta, \lambda} || \pi_{\thetatilde_t, \lambda}) 
&= \Expectt{\pi_{\theta, \lambda}}{\log \frac{\pi_{\theta, \lambda}(\uhatbold)}{\pi_{\thetatilde_t, \lambda}(\uhatbold)}} \\
&= \Expectt{\pi_{\theta, \lambda}}{\log \frac{p_\theta(h_{\lambda}(\uhatbold; c)) }{p_{\thetatilde}(h_{\lambda}(\uhatbold; c)) } 
\cancel{\frac{\Det{\pderiv{h_{\lambda}}{\uhatbold}} }{\Det{\pderiv{h_{\lambda}}{\uhatbold}} }}
} \\
&= \Expectt{p_\theta}{\log \frac{p_\theta(\zhatbold) }{p_{\thetatilde}(\zhatbold) } } \\
&= D_{KL} (p_{\theta} || p_{\thetatilde_t}),
\end{aligned}
\end{equation}
Then, using Proposition 1 from \citet{wagener2019online}, we can write the update rule as
\begin{equation}
\phi_t = (1 - \gamma_t) \phitilde_t + \gamma_t
\frac{
    \Expect{p_{\thetatilde}, \fhat}{e^{-\frac{1}{\beta}C(\xhatbold_t, h_{\lambda}^{-1}(\zhatbold_t; c)))} \phi(\zhatbold_t)}
}{
    \Expect{p_{\thetatilde}, \fhat}{e^{-\frac{1}{\beta}C(\xhatbold_t, h_{\lambda}^{-1}(\zhatbold_t; c))}}
}
\end{equation}
And when the sufficient statistic is $\phi(\zhatbold_t) = (\zhatbold_t, \zhatbold_t \zhatbold_t^T)$, then we arrive at the usual MPPI update for the mean, but now defined in terms of the latent samples:
\begin{equation}
\mubold_t
= (1-\gamma_t^\mu) \mutildebold_t + \gamma_t^\mu \frac{\Expect{p_{\thetatilde}, \fhat}{e^{ -\frac{1}{\beta} C(\xhatbold_t, h_\lambda^{-1}(\zhatbold_t; c)) } \zhatbold_t}}{\Expect{p_{\thetatilde}, \fhat}{e^{ -\frac{1}{\beta} C(\xhatbold_t, h_\lambda^{-1}(\zhatbold_t; c))} }}
\end{equation}
which we can equivalently rewrite in terms of $\pi_{\thetatilde, \lambda}$ as:
\begin{equation}
\mubold_t = (1-\gamma_t^\mu) \mutildebold_t + \gamma_t^\mu \frac{\Expect{\pi_{\thetatilde, \lambda}, \fhat}{e^{ -\frac{1}{\beta} C(\xhatbold_t, \uhatbold_t) } h_\lambda(\uhatbold_t; c)}}{\Expect{\pi_{\thetatilde, \lambda}, \fhat}{e^{ -\frac{1}{\beta} C(\xhatbold_t, \uhatbold_t)} }}
\end{equation}
\end{proof}

\subsection{Proof of Approximate Gradient Through MPPI Update}
\label{app:approx_gradient_proof}

\begin{theorem}
Let the MPPI update of the latent mean be given by
\begin{equation}
\mubold_{t}(\lambda) = (1-\gamma_t^\mu) \mutildebold_{t}(\lambda) + \gamma_t^\mu \Delta \mubold_{t},\qquad
\Delta \mubold_{t} = 
\frac{\Expect{\pi_{\thetatilde_{t}(\lambda), \lambda}, \fhat}{e^{ -\frac{1}{\beta} C(\xhatbold_t, \uhatbold_t) } h_{\lambda}(\uhatbold_{t}; c) }}{\Expect{\pi_{\thetatilde_{t}(\lambda), \lambda}, \fhat}{e^{ -\frac{1}{\beta} C(\xhatbold_t, \uhatbold_t)} }}.
\end{equation}
Then the gradient of $\Delta \mubold_{t}$ with respect to $\lambda$ can be approximated as
\begin{equation}
\pderiv{\Delta \mubold_t}{\lambda} \approx M_1  - M_2 M_3 
\end{equation}
where
\begin{equation}
\begin{gathered}
M_1 = \sum_{i=1}^N w_i \Big[ \nabla_\lambda h_\lambda(\uhatbold_t^{(i)}; c) + h_\lambda(\uhatbold_t^{(i)}; c) \nabla_\lambda \log \pi_{\thetatilde(\lambda),\lambda}(\uhatbold_t^{(i)} | c) \Big],\\
M_2 = \sum_{i=1}^N w_i h_\lambda(\uhatbold_t^{(i)}; c),\quad 
M_3 = \sum_{i=1}^N w_i \nabla_\lambda \log \pi_{\thetatilde(\lambda),\lambda}(\uhatbold_t^{(i)} | c).
\end{gathered}
\end{equation}
\end{theorem}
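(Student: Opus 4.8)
The plan is to treat $\Delta\mubold_t$ as a ratio of two expectations and differentiate it with the quotient rule, handling the non-differentiable cost through the likelihood-ratio trick. First I would write $\Delta\mubold_t = N(\lambda)/D(\lambda)$, where $N(\lambda) = \Expect{\pi_{\thetatilde(\lambda),\lambda},\fhat}{e^{-\frac{1}{\beta}C(\xhatbold_t,\uhatbold_t)} h_\lambda(\uhatbold_t;c)}$ and $D(\lambda)=\Expect{\pi_{\thetatilde(\lambda),\lambda},\fhat}{e^{-\frac{1}{\beta}C(\xhatbold_t,\uhatbold_t)}}$. The quotient rule then gives $\pderiv{\Delta\mubold_t}{\lambda}=\frac{N'(\lambda)}{D(\lambda)}-\frac{N(\lambda)}{D(\lambda)}\frac{D'(\lambda)}{D(\lambda)}$, so the whole task reduces to computing $N'$ and $D'$ and recognizing the three resulting ratios as $M_1$, $M_2$, and $M_3$.

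Next I would differentiate $N$ and $D$ by expanding each expectation over the policy as an integral against $\pi_{\thetatilde(\lambda),\lambda}(\uhatbold_t\mid c)$ and applying the product rule. The density depends on $\lambda$ both directly through the flow $h_\lambda$ and implicitly through the shift-model output $\thetatilde(\lambda)$; differentiating it produces the score $\nabla_\lambda\log\pi_{\thetatilde(\lambda),\lambda}$ via the likelihood-ratio identity $\nabla_\lambda\pi = \pi\,\nabla_\lambda\log\pi$. The key approximation enters here: since the dynamics $\fhat$ and cost $C$ are assumed non-differentiable, I treat the weighting factor $e^{-\frac{1}{\beta}C(\xhatbold_t,\uhatbold_t)}$ as constant in $\lambda$ and do not propagate gradients through it. With this convention, $N'$ picks up two contributions --- one from differentiating the explicit $h_\lambda$ factor in the numerator, and one from the score acting on the density --- whereas $D'$ only picks up the score contribution, since its integrand carries no explicit $h_\lambda$ factor.

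Finally I would form the self-normalized Monte Carlo estimates. Dividing each expectation by $D(\lambda)=\Expect{\pi_{\thetatilde(\lambda),\lambda},\fhat}{e^{-\frac{1}{\beta}C}}$ turns the cost exponentials into the normalized weights $w_i = e^{-\frac{1}{\beta}C_i}\big/\sum_j e^{-\frac{1}{\beta}C_j}$ from \Cref{eq:MPPI_mean_update_approx}. The ratio $N'/D$ then collapses exactly to $M_1$, the ratio $N/D$ is the familiar weighted sum $\Delta\mubold_t\approx M_2$, and $D'/D\approx M_3$, so the quotient-rule expression becomes $M_1-M_2 M_3$, as claimed.

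I expect the main obstacle to be justifying the approximation cleanly --- specifically, arguing that holding $e^{-\frac{1}{\beta}C}$ fixed is the correct move given the non-differentiability assumption, and keeping the bookkeeping of the two sources of $\lambda$-dependence (the explicit flow map versus the implicit dependence through $\thetatilde(\lambda)$) consistent across $N'$, $D'$, and the score term. Everything else is a routine quotient-rule plus self-normalized importance-sampling calculation.
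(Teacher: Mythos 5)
Your proposal matches the paper's proof essentially step for step: the same $N(\lambda)/D(\lambda)$ decomposition, the same quotient rule, the same likelihood-ratio gradients for $\nabla_\lambda N$ and $\nabla_\lambda D$, and the same recognition that dividing each term by $D(\lambda)$ turns the self-normalized Monte Carlo estimates into the forward-pass MPPI weights $w_i$, yielding $M_1 - M_2 M_3$. One small clarification on the point you flag as the main obstacle: once the expectations are rewritten over the control-space density $\pi_{\thetatilde(\lambda),\lambda}(\uhatbold_t \mid c)$ (as the paper does precisely for this reason), the factor $e^{-\frac{1}{\beta} C(\xhatbold_t, \uhatbold_t)}$ depends only on the integration variables and carries no $\lambda$-dependence at all, so holding it fixed is exact rather than a convention needing justification --- the ``$\approx$'' in the theorem comes solely from the Monte Carlo approximation.
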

\begin{proof}
First, we rewrite $\Delta \mubold_t = \frac{N(\lambda)}{D(\lambda)}$, where
\begin{equation}
N(\lambda) = \Expect{\pi_{\thetatilde_{t}(\lambda), \lambda}, \fhat}{e^{ -\frac{1}{\beta} C(\xhatbold_t, \uhatbold_t) } h_{\lambda}(\uhatbold_{t}; c) }\qquad
D(\lambda) = \Expect{\pi_{\thetatilde_{t}(\lambda), \lambda}, \fhat}{e^{ -\frac{1}{\beta} C(\xhatbold_t, \uhatbold_t)} }.
\end{equation}
Then by the quotient rule of calculus, we have
\begin{equation}
\pderiv{\Delta \mubold_t}{\lambda} = \frac{\nabla_\lambda N(\lambda)}{D(\lambda)} - \frac{N(\lambda)}{D(\lambda)} \frac{\nabla_\lambda D(\lambda)}{D(\lambda)}.
\label{eq:approx_grad_symbolic}
\end{equation}
We can compute each of these individual terms using the likelihood ratio gradients
\begin{equation}
\nabla_\lambda N(\lambda) = \Expect{\pi_{\thetatilde_{t}(\lambda), \lambda}, \fhat}{
e^{ -\frac{1}{\beta} C(\xhatbold_t, \uhatbold_t) } 
\Par{
    \nabla_\lambda  h_{\lambda}(\uhatbold_{t}; c) + 
    h_{\lambda}(\uhatbold_{t}; c) \nabla_\lambda \log \pi_{\thetatilde(\lambda),\lambda}(\uhatbold_t^{(i)} | c)
}
}
\end{equation}
\begin{equation}
\nabla_\lambda D(\lambda)  = \Expect{\pi_{\thetatilde_{t}(\lambda), \lambda}, \fhat}{
e^{ -\frac{1}{\beta} C(\xhatbold_t, \uhatbold_t) } 
\nabla_\lambda \log \pi_{\thetatilde(\lambda),\lambda}(\uhatbold_t^{(i)} | c) 
}
\end{equation}
Since each of the terms that make up our gradient in \Cref{eq:approx_grad_symbolic} are divided by $D(\lambda)$, when we approximate them with Monte Carlo sampling, we can actually write them in terms of the same weights used by MPPI in the forward pass:
\begin{subequations}
\begin{align}
\frac{\nabla_\lambda N(\lambda)}{D(\lambda)} &= \sum_{i=1}^N w_i \Big[ \nabla_\lambda h_\lambda(\uhatbold_t^{(i)}; c) + h_\lambda(\uhatbold_t^{(i)}; c) \nabla_\lambda \log \pi_{\thetatilde(\lambda),\lambda}(\uhatbold_t^{(i)} | c) \Big] = M_1\\
\frac{N(\lambda)}{D(\lambda)} &= \sum_{i=1}^N w_i h_\lambda(\uhatbold_t^{(i)}; c) = M_2\\
\frac{\nabla_\lambda D(\lambda)}{D(\lambda)} &= \sum_{i=1}^N w_i \nabla_\lambda \log \pi_{\thetatilde(\lambda),\lambda}(\uhatbold_t^{(i)} | c) = M_3
\end{align}
\end{subequations}
\end{proof}

\vspace{-4ex}
\subsection{Sigmoid Flow Layer} 
\vspace{-1ex}
\label{app:sigmoid}
We wish to use a sigmoid layer in our normalizing flow to constrain our control sample $\uhatbold$ such that each control along the horizon is between $\barbelow{u}$ and $\bar{u}$.
Since the sigmoid function is invertible, if we append a sigmoid layer at the end of our flow, we have that
\begin{equation}
    \uhatbold = w\sigma(\yhatbold_{K-1}) + b\quad
    \Longleftrightarrow\quad
    \yhatbold_{K-1} = \sigma^{-1} \Big(\frac{\uhatbold - b}{w} \Big) = \log \Big( \frac{\uhatbold - b}{w - \uhatbold + b} \Big),
\end{equation}
where $w = \bar{u} - \barbelow{u}$ and $b = \barbelow{u}$, the sigmoid and logit functions are applied element-wise, and the scaling and translation are broadcasted to each element of the vector $\uhatbold$.
The derivative of the forward transformation is given by:
\begin{equation}
\begin{aligned}
\pderiv{}{x}\Big( w\sigma(x)+b \Big) &= w \sigma(x) (1 - \sigma(x)) \\
&= \frac{w}{1+e^{-x}} \Big( 1 - \frac{1}{1+e^{-x}} \Big) \\
&= \frac{w}{(1+e^{-x})(1+e^x)}.
\end{aligned}
\end{equation}
Since the sigmoid is applied element-wise, it has a diagonal Jacobian, the log-determinant of which is simply the sum of the log of its diagonal terms:
\begin{equation}
\log \Det{\pderiv{\uhatbold}{\yhatbold_{K-1}}} = 
\sum_{i=1}^{MH} \log(w) - \log(1+e^{-\uhat_i}) - \log(1+e^{\uhat_i}),
\end{equation}
where we can implement the last two terms with the Softplus activation function.
In the reverse direction, we have that:
\begin{equation}
\begin{aligned}
\pderiv{}{x} \Bigg( \sigma^{-1} \Big(\frac{x - b}{w} \Big) \Bigg)
&= \frac{1}{x-b} + \frac{1}{w-x-b} 
\end{aligned}
\end{equation}
Therefore, the log-determinant is given by:
\begin{equation}
\log \Det{\pderiv{\yhatbold_{K-1}}{\uhatbold}} = 
- \sum_{i=1}^{MH} \Big(\log(\uhat_i-b) + \log(w - \uhat_i -b) \Big),
\end{equation}
As such, computing the inverse and the log-determinant terms of the Jacobian is efficient and fast in both directions, adding minimal overhead to the flow.

\end{document}